\documentclass[letterpaper,11pt]{article}
\usepackage[margin=1in]{geometry}

\usepackage[affil-it]{authblk}
\usepackage{microtype}
\usepackage{graphicx}
\usepackage{subfigure}
\usepackage{booktabs} %
\usepackage{natbib}

\usepackage[colorlinks]{hyperref}
\hypersetup{linkcolor=blue,
citecolor=blue,
}

\usepackage{hyperref}

\usepackage{amsmath}
\usepackage{amssymb}
\usepackage{mathtools}
\usepackage{amsthm}
\usepackage{dsfont}
\usepackage[capitalize,noabbrev]{cleveref}

\theoremstyle{plain}
\newtheorem{theorem}{Theorem}[section]

\newtheorem{lemma}[theorem]{Lemma}
\newtheorem{hyp}[theorem]{Assumption}

\newtheorem{example}{Example}

\newtheorem{definition}[theorem]{Definition}

\theoremstyle{definition}
\newtheorem{remark}[theorem]{Remark}

\newcommand{\RR}{\ensuremath{\mathbf R}}
\newcommand{\R}{\ensuremath{\mathbb R}}
\newcommand{\Ca}{\ensuremath{\widehat{\mathcal{C}}_\alpha}}
\newcommand{\wQn}{\ensuremath{\widehat{\mathcal{Q}}_n}}

\def\argmin{\mathop{\rm argmin}\limits}%
\def\argmax{\mathop{\rm argmax}\limits}%
\newcommand{\indic}[1]{\mathds{1}_{#1}}
\newcommand{\ie}{\textit{i.e.,~}}
\newcommand{\eg}{\textit{e.g.,~}}

\title{Optimal Transport-based Conformal Prediction}
\date{}

\begin{document}

\author[1]{Gauthier Thurin\thanks{Corresponding author: \texttt{gthurin@mail.di.ens.fr}}}
\author[1]{Kimia Nadjahi}
\author[2]{Claire Boyer}

\affil[1]{CNRS, ENS Paris, France}
\affil[2]{Université Paris-Saclay, Orsay, France}

\maketitle

\vspace{-0.5cm}
\begin{abstract}
 Conformal Prediction (CP) is a principled framework for quantifying uncertainty in black-box learning models, by constructing prediction sets with finite-sample coverage guarantees. Traditional approaches rely on scalar nonconformity scores, which fail to fully exploit the geometric structure of multivariate outputs, such as in multi-output regression or multiclass classification. Recent methods addressing this limitation impose predefined convex shapes for the prediction sets, potentially misaligning with the intrinsic data geometry. We introduce a novel CP procedure handling multivariate score functions through the lens of optimal transport. Specifically, we leverage Monge-Kantorovich vector ranks and quantiles to construct prediction region with flexible, potentially non-convex shapes, better suited to the complex uncertainty patterns encountered in multivariate learning tasks. We prove that our approach ensures finite-sample, distribution-free coverage properties, similar to typical CP methods. We then adapt our method for multi-output regression and multiclass classification, and also propose simple adjustments to generate adaptive prediction regions with asymptotic conditional coverage guarantees. Finally, we evaluate our method on practical regression and classification problems, illustrating its advantages in terms of (conditional) coverage and efficiency.
\end{abstract}

\section{Introduction}

In various domains, including high-stakes applications, state-of-the-art performances are often achieved 
by black-box machine learning models. As a result, accurately quantifying the uncertainty of their predictions has become a critical priority.
Conformal Prediction \citep[CP,][]{vovk2005algorithmic}
has emerged as a compelling framework to address this need, by generating prediction sets with \textit{coverage guarantees} (ensuring they contain the true outcome with a specified confidence level) regardless of the model or data distribution. 
Most CP methods are thus model-agnostic and distribution-free while easy to implement, which explain their growing popularity in recent years. 

The main idea of CP is to convert a set of \textit{non-conformity scores} into reliable uncertainty sets using \textit{ quantiles}. Non-conformity scores are empirical measurements of how unusual a prediction is. 
For example, in regression, the score can be defined as $| \hat{y} - y|$, where $\hat{y} \in \R$ is the model's prediction and $y \in \R$ the true response \citep{lei2018distribution}. 
These scores are central to the CP framework as they encapsulate the uncertainty stemming from both the model and the data, directly influencing the size and shape of the resulting prediction sets.
Therefore, the quality of the prediction sets hinges on the relevance of the chosen non-conformity score: while a poorly designed score may still achieve the required coverage guarantee, it often leads to overly conservative or inefficient prediction sets, failing to capture the complex patterns of the underlying data distribution \citep{IntroCP}. 

Most CP approaches rely on \textit{scalar} non-conformity scores  \citep[\textit{e.g.},][]{IntroCP,romano2020classification,cauchois2021knowing,sesia2021conformal,lei2018distribution}. Although conceptually simple, such one-dimensional representations can be too restrictive or poorly suited in applications that require multivariate prediction sets. To circumvent this, recently-proposed CP methods seek to incorporate correlations despite the use of scalar scores, by leveraging techniques such as copulas \citep{messoudi2021copula} or ellipsoids \citep{johnstone2021conformal,messoudi2022ellipsoidal,henderson2024adaptive}.
Nevertheless, these approaches either lack finite-sample coverage guarantees or impose restrictive modeling assumptions that prescribe the shape of the prediction region.
\citet{feldman2023calibrated} recently proposed a CP method able to construct more adaptive prediction regions with non-convex shapes, establishing a connection with multivariate quantiles. However, their method (as conformalized quantile regression, \citealp{romano2019conformalized}) requires intervening in the way the predictor is trained and, therefore, cannot be directly applied to a black-box model.

\paragraph{Contributions.\;} 
In this work, we introduce a novel general 
CP framework that accommodates multivariate scores, enabling more expressive representations of prediction errors. The core idea is to leverage \textit{Monge-Kantorovich (MK) quantiles} \citep{chernozhukov2015mongekantorovich,HallinAOS_2021}, a multivariate extension of traditional scalar quantiles rooted in optimal transport theory with various applications \citep[see \textit{e.g.},][]{carlier2016vector,hallin2022measure,rosenberg2023fast}.
MK quantiles are constructed by mapping multidimensional scores onto a reference distribution. 
The resulting CP framework, called OT-CP for Optimal Transport-based Conformal Prediction, effectively captures the structure and dependencies within multivariate data while ensuring distribution-free ranks, thanks to the distinctive properties of MK quantiles \cite{deb2023multivariate,HallinAOS_2021}. 
We demonstrate that OT-CP constructs prediction regions with finite-sample coverage guarantees. These hold for any choice of multivariate score function, which makes OT-CP a robust and practical tool to address complex uncertainty quantification task.

After presenting the general OT-CP methodology with its theoretical guarantees (\Cref{sec:methodology}), we apply it on two typical learning tasks: multi-output regression (\Cref{sec:multireg}) and classification (\Cref{sec:classif}). For each of these, we use multivariate score functions which, when integrated in OT-CP, yield prediction regions that effectively capture correlations between the score dimensions. 
In the context of regression, we also develop an extension of OT-CP that conditionally adapts to input covariates, further enhancing the flexibility of our method. Moreover, we show that this adaptive version provably reaches asymptotic conditional coverage.
These two case studies serve a dual purpose:
they highlight the versatility and user-friendliness of OT-CP while offering concrete frameworks to evaluate its benefits over existing methods through numerical experiments. In doing so, we believe this lays a solid foundation for future explorations of OT-CP across a wider range of applications.

\section{Methodology}
\label{sec:methodology}

\subsection{Setting}

We consider a pre-trained black-box model $\hat{f} : \mathcal{X} \rightarrow \mathcal{Y}$, where $\mathcal{X}$ and $\mathcal{Y}$ respectively denote the input and output spaces of the learning task.
Assume we have access to a set of $n$ exchangeable observations $(X_i,Y_i)\in\mathcal{X}\times \mathcal{Y}$, not used during the training of $\hat{f}$ and referred to as the \textit{calibration set}.
Consider a \textit{score function} $s: \mathcal{X}\times \mathcal{Y} \rightarrow \mathbb{R}_+^d$ that produces $d\geq 1$ \textit{non-conformity scores}, measuring the discrepancies between the target $Y_i$ and the prediction $\hat{f}(X_i)$. 

Considering a multivariate score in the context of CP departs from typical strategies, which rely on scalar scores. 
Such multivariate scores can particularly be useful for quantifying uncertainties, as described in the examples below. %

\begin{example}[Multi-output regression]
In multi-output regression, both the response $Y$ and prediction $\hat{f}(X)$ take values in $\R^d$. %
One can %
consider multivariate scores $s(Y,\hat{f}(X))$  corresponding to component-wise prediction errors (see \Cref{sec:multireg}), without the need of aggregating them into a single value (\eg by considering the mean squared error). %
\Cref{figure_methodo_a} illustrates 2-dimensional scores in a context of bivariate regression.
\end{example}

\begin{example}[Multiclass classification]\label{exempleMultiClass}
     Consider a classification setting with $K \geq 3$ classes and let $\hat{\pi}(x) =\{\hat{\pi}_k(x)\}_{k=1}^K$ be the estimated class probabilities returned by a classifier for some input $x$. %
     Denote by $\bar{y} = \{\mathds{1}_{k=y} \}_{k=1}^K$ the one-hot encoding of a label $y$.
     A multivariate score can be formed as the component-wise absolute difference
    \begin{equation}\label{ScoreExampleMultiClass}
     s(x,y) = \vert \hat{\pi}(x) - \bar{y} \vert \in \mathbb{R}^K_+\,.
     \end{equation}
     This score retains $K$-dimensional predictive information, allowing for the exploration of correlations between its components.
     For instance, when $K=3$, consider two inputs $x_1$ and $x_2$ with output probabilities $\hat{\pi}(x_1) = (0.6, 0.4, 0)$ and $\hat{\pi}(x_2) = (0, 0.4, 0.6)$. 
     For both predictions, assessing the conformity of $y=2$ with
     a typical score $1 - \hat{\pi}_y(x)$ used in CP for classification would return the same value of $0.6$. 
     This potentially ignores that co-occurrences between labels $1$ and $2$ might be more frequent than between $2$ and $3$.  
     In contrast, the multivariate alternative \eqref{ScoreExampleMultiClass} distinguishes these two probability profiles, as $s(x_1, y) \neq s(x_2, y)$. %
    This can be more helpful to capture the underlying confusion patterns of the predictor across different label modalities.
\end{example}

In the rest of the paper, we denote by $\{S_i\}_{i=1}^n = \{ s(X_i,Y_i) \}_{i=1}^n$ the %
scores computed on the calibration set.

\subsection{Optimal transport toolbox}\label{OTtools}

In the context of conformal prediction, dealing with multivariate scores implies defining an adequate notion of multivariate quantiles. To do so, we view the non-conformity scores $\{S_i\}_{i=1}^n$ through the empirical distribution $\hat{\nu}_n = \frac{1}{n}\sum_{i=1}^n \delta_{S_i}$ and leverage optimal transport (OT) tools, more specifically, \textit{Monge-Kantorovich quantiles}. 

\begin{definition}[Empirical Monge-Kantorovich ranks, \citet{chernozhukov2015mongekantorovich,HallinAOS_2021}]\label{defMKRanks}
    Consider the reference rank vectors $\{U_i\}_{i=1}^n$ given by
    \begin{equation}\label{SignedReferenceRanks}
        \forall i \in \{1,\dots,n\},\;\; U_i = \frac{i}{n} \theta_i\,,
    \end{equation}
    where $\theta_i$ are i.i.d.\ random vectors drawn uniformly on the Euclidean sphere $\mathbb{S}^{d-1} = \{ \theta\in\mathbb{R}^{d} : \Vert \theta \Vert =1 \}$. The \textnormal{Monge-Kantorovich rank map} is defined for any score $s\in \mathbb{R}^d$ as
    \begin{equation}\label{defRn}
        \RR_n(s) = \argmax_{ U_i: 1\leq i \leq n} \big\{ \langle U_i,s \rangle - \psi_n(U_i) \big\}\,, 
    \end{equation}
    with $\psi_n$ the potential solving the dual of Kantorovich's OT problem, \ie
    \begin{equation*}
        \psi_n = \argmin_{\varphi}  \frac{1}{n}\sum_{i=1}^n \varphi(U_i)  + \frac{1}{n}\sum_{i=1}^n \varphi^*(S_i),
    \end{equation*}
    where the %
    optimization is performed over the set of lower-semicontinuous convex functions, and $\varphi^*(x) = \sup_{u} \{\langle x,u \rangle - \varphi(u)\}$ is the Legendre transform of a convex function $\varphi$. %
\end{definition}

Note that $\RR_n$ verifies $\RR_n(S_i) = U_{\sigma_n(i)}$ for $i\in\{1,\dots, n\}$, where $\sigma_n$ is the solution of the assignment problem \begin{equation}\label{KantoDiscreteOT}
        \sigma_n = \argmin_{\sigma \in P_n} \sum_{i=1}^n \Vert S_i - U_{\sigma(i)} \Vert^2,
\end{equation}
for $P_n$ the set of all permutations of $\{1,\cdots,n\}$.

This transport-based rank map echoes
the one-dimensional case, with $\{U_i\}_{i=1}^n$ replacing traditional ranks $\{1,2, \dots , n\}$ based on univariate quantile levels $\{\frac1n,\frac2n, \dots , 1\}$. 
By definition, $\Vert \RR_n(s)\Vert \in \{ \frac{1}{n}, \frac{2}{n},\dots,1\}$ for any $s\in\R^d$. This allows to introduce a specific ordering of $\mathbb{R}^d$, namely 
\begin{equation*}\label{eq:ordering}
    s_1 \leq_{\RR_n} s_2 \; \text{ if, and only if, } \;\Vert \RR_n(s_1) \Vert \leq \Vert \RR_n(s_2) \Vert \,.
\end{equation*} 
This multivariate ordering %
is illustrated in \Cref{figure_methodo}. 
A main virtue is its ability to capture the shape of the underlying probability distribution.
\begin{figure}
    \centering
    {\subfigure[Multivariate scores $\{S_i\}_{i=1}^n$]{\includegraphics[width=0.4\textwidth]{./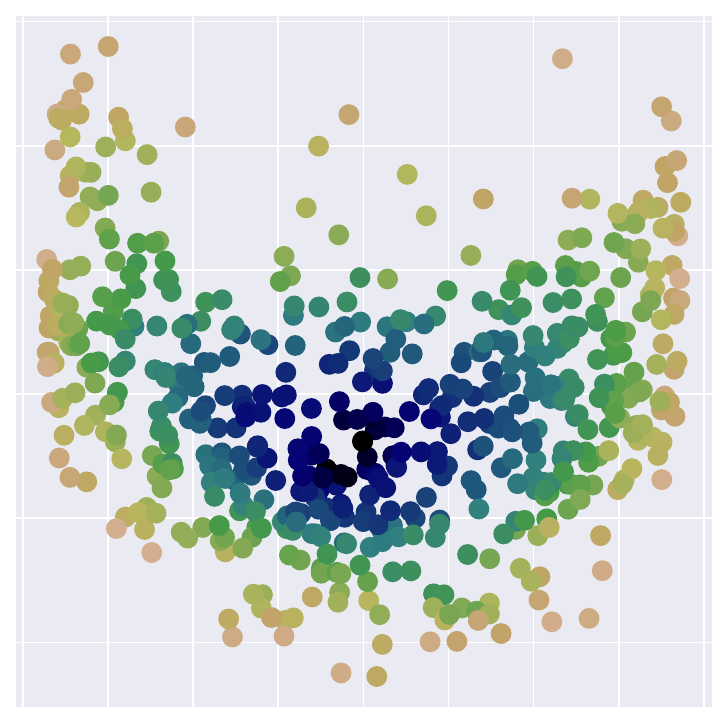} \label{figure_methodo_a}}}
    {\subfigure[Reference rank vectors $\{U_i\}_{i=1}^n$]{\includegraphics[width=0.4\textwidth]{./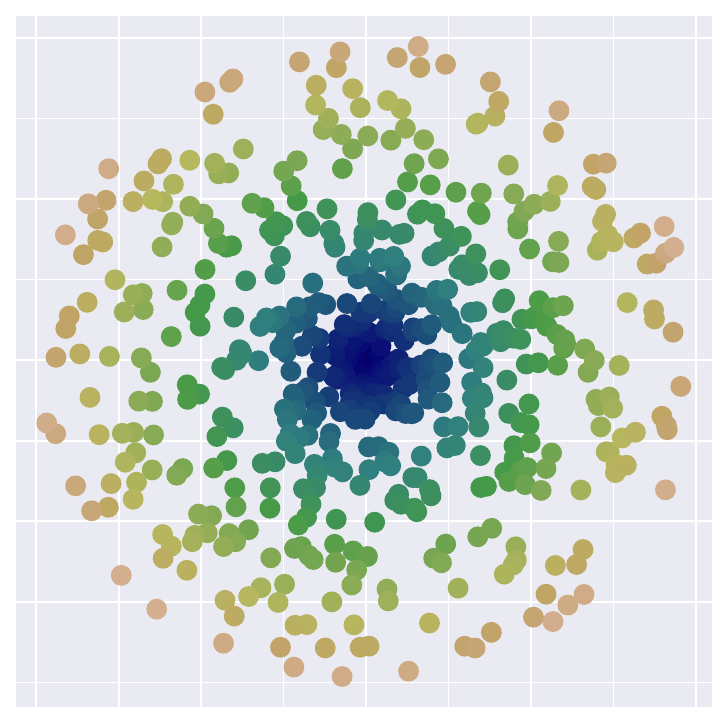} \label{figure_methodo_b}}}
    \caption{Ranking multivariate scores using optimal transport. The colormap encodes how the 2-dimensional scores $\{S_i\}_{i=1}^n$ in (a) are transported onto the reference rank vectors $\{U_i\}_{i=1}^n$ in (b). 
    }
    \label{figure_methodo}
\end{figure}
Another notable advantage
of \Cref{defMKRanks} is that the ranks are \textit{distribution-free}: %
by construction,  $\Vert \RR_n(S_1) \Vert,\dots,\Vert \RR_n(S_n) \Vert$ correspond to a random permutation of $\{ \frac{1}{n}, \frac{2}{n},\cdots,1\}$, regardless of the distribution of the non-conformity scores $\{S_i\}_{i=1}^n$. 

This concept of multivariate rank allows the construction of multivariate quantile regions, which play a central role in our methodology, as we will see in the next section.

\begin{remark}\label{remarkchoiceReference}
    The choice of reference rank vectors $\{U_i\}_{i=1}^n$ in \Cref{defMKRanks} is flexible and can be tailored to specific needs, provided that $\{S_i\}_{i=1}^n$ and $\{U_i\}_{i=1}^n$ remain independent \citep[Remark 3.11]{ghosal2021multivariate}. The convention adopted in \Cref{defMKRanks} has the merit to fix the ideas and to be appropriate for regression tasks.
\end{remark}

\subsection{OT-based conformal prediction (OT-CP)}\label{methodGeneral}

We present a new methodology, OT-CP, that leverages optimal transport to perform (split) conformal prediction with multivariate scores. Unlike traditional CP approaches, our method relies on a multivariate perspective to quantify uncertainties and construct prediction regions through Monge-Kantorovich vector quantiles. Given a confidence level $\alpha\in[0,1]$, the proposed framework consists of three steps:
\begin{enumerate}
    \item \textbf{Multivariate score computation:} Compute the multivariate scores $\{S_i\}_{i=1}^n = \{ s(X_i,Y_i) \}_{i=1}^n$ on the calibration set $\{(X_i, Y_i)\}_{i=1}^n$,
    \item \textbf{Quantile region construction:} 
    \begin{enumerate}
    \item Split the calibration set into $\mathcal{D}_1$ and $\mathcal{D}_2$ of respective sizes $n_1$  and $n_2$ such that $n_1 + n_2 = n$, 
    \item Compute the MK rank map $\RR_{n_1}$ based on $\mathcal{D}_1$,
    \item Construct the MK quantile region,
    \begin{equation}\label{defCalpha}
    \wQn(\alpha) = \Big\{ s : \Vert \RR_{n_1}(s) \Vert \leq 
    \rho_{\lceil(n_2+1)\alpha\rceil}
    \, \Big\}, 
    \end{equation}
    where $\rho_{\lceil(n_2+1)\alpha\rceil}$ is the $\lceil(n_2+1)\alpha\rceil$-th smallest element of $\{\| \RR_{n_1}(s(X_i,Y_i))\|\,:\, (X_i,Y_i) \in \mathcal{D}_2 \}$. 
    \end{enumerate}
    \item \textbf{Prediction set computation:} For a test input $X_{\rm test}$ from the same distribution as the calibration set, return the prediction region, %
    $$\Ca(X_{\rm test})\!=\!\Big\{ y \in \mathcal{Y} : s(X_{\rm test},y) \in \wQn ( \alpha ) \Big\} \,.$$
\end{enumerate}

The key novelty of OT-CP lies in the use of multivariate scores (step~1) along with
the construction of an OT-based confidence region (step~2). 
By definition, this region leverages multivariate quantiles of the empirical distribution of calibration scores, accounting for marginal correlations within their components. 
Our methodology enables the construction of confidence regions without predefined shapes, thereby aligning better with the underlying data distribution. In step 3, the prediction set for a new input $X_{\rm test}$ is evaluated through the preimage by the score function of the quantile region.  
This generalizes the one-dimensional case, where classical quantiles are used to construct prediction sets in the form of intervals.

Extending conformal prediction to the multivariate setting using optimal transport quantiles poses non-trivial challenges, as standard distribution-free arguments do not directly apply. 
To address these subtleties, we introduce a careful splitting strategy \citep[][§3.3.1]{kuchibhotla2020exchangeability} in step 2, which helps to provide explicit theoretical coverage guarantees for our OT-CP method.

\begin{remark}[Computational aspects] %
Our approach requires solving an optimal transport problem between two discrete distributions, each consisting of $n$ points. In the case of univariate scores, this OT problem simplifies to a sorting operation, thus one recovers the standard $O(n\log n)$ cost. In the multivariate case, the OT problem is generally solved via linear programming, inducing a computational complexity of $O(n^3)$. For the numerical experiments carried out in this paper, the computational times remain reasonable, as reported in \cref{additionalexp}. 
In large-scale settings, efficient approximation methods can reduce the computational complexity to 
$O(n^2)$ \citep{peyre2019computational}, as exploited in the entropic OT-CP methodology of \citet{klein2025multivariate}.

\end{remark}

\subsection{Coverage guarantees}\label{CoverageGuarantees}

Next, we show that the prediction regions constructed with OT-CP are valid, meaning they satisfy the coverage property.

\begin{theorem}[Coverage guarantee]\label{thmMarginalCoverage}
    Suppose $\{(X_i, Y_i)\}_{i=1}^n \cup \{(X_{\rm test}, Y_{\rm test})\}$ are exchangeable. Let $\alpha \in (0,1)$ such that $\lceil \alpha(n_2+1) \rceil \leq n_2$. 
    The prediction region $\Ca$ constructed on $\{(X_i, Y_i)\}_{i=1}^n$ satisfies
    \begin{align}\label{coverageprop}
    \alpha & \, \leq \, \mathbb{P}\big( Y_{\rm test} \in \Ca(X_{\rm test}) \big) \, \leq \, \alpha + \frac{n_{\rm ties}}{n_2+1} \,,
    \end{align}
    where the probability is taken over the joint distribution of $\{(X_i, Y_i)\}_{i=1}^n \cup \{(X_{\rm test}, Y_{\rm test})\}$ and $n_{\rm ties}\in \{1,\hdots, n_2+1\}$ is the maximum number of ties in $\{ \Vert \RR_{n_1}(s(X_i,Y_i)\Vert : (X_i, Y_i) \in \mathcal{D}_2\}\cup\{ \Vert \RR_{n_1}(S_{\rm test})\Vert \}$ (\ie each distinct value in this sample appears at most $n_{\rm ties}$ times).
\end{theorem}

We present two proof strategies for \Cref{thmMarginalCoverage} in \Cref{app:marginal_coverage_proof2,app:marginal_coverage_proof1}.
The main challenge lies in extending the desirable properties of univariate quantiles, namely distribution-freeness (see, \textit{e.g.}, \citealp{HallinAOS_2021}) and stability, to the multivariate setting. 
However, the stability arguments invoked in standard proofs of the quantile lemma (see, \textit{e.g.}, \citealp{tibshirani2019conformal}) do not directly apply here, as they rely on unresolved theoretical questions in optimal transport. 
To address this difficulty, inspired by \citet{kuchibhotla2020exchangeability}, we adopt a splitting strategy that enables us to derive a multivariate analogue of the quantile lemma. 
Indeed, the rank of a new sample score among the $n_2$ calibration scores 
$\mathcal{D}_2$ 
is guaranteed to follow a uniform distribution, ensuring valid prediction regions without distributional assumptions.

\Cref{thmMarginalCoverage} ensures that, for a given coverage level $\alpha \in (0,1)$, the true label $Y_{\rm test}$ belongs to the OT-based prediction region $\Ca (X_{\rm test})$ with probability at least $\alpha$. Moreover, this coverage probability is shown to be of the order of $\alpha$, being upper-bounded by $\alpha+n_{\rm ties}/(n_2+1)$ with $n_2$ the size of the subset $\mathcal{D}_2$ and $n_{\rm ties} \in \{1, \dots, n_2+1\}$. 
The factor in this upper bound naturally arises from the discrete feature of the MK rank map \eqref{defRn} and our splitting procedure, which may introduce ties in the ranking process.
Note that a tie-breaking rule can be applied if ties occur, as usually done in CP \citep{IntroCP} to enforce $n_{\rm ties}=1$.

While OT-CP can be applied to any model and score function, the next sections focus on specific settings to clearly demonstrate its benefits and potential.

\begin{figure*}[t]
    \centering  
    {\subfigure[Prediction $x\mapsto \hat{f}(x)$  and prediction sets $\Ca(x)$ ]{\includegraphics[height=0.3\textwidth]{./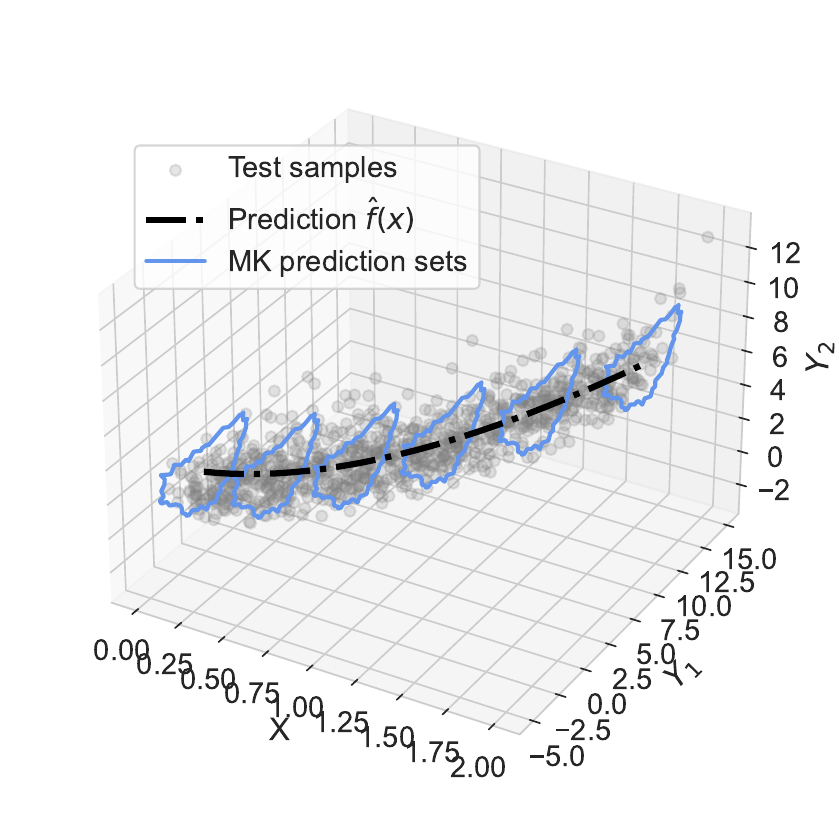}\label{fig:reg_prediction_sets3d}}}
    {\subfigure[Scores (residuals) with different quantile regions]{\includegraphics[width=0.25\textwidth,height=0.25\textwidth]{./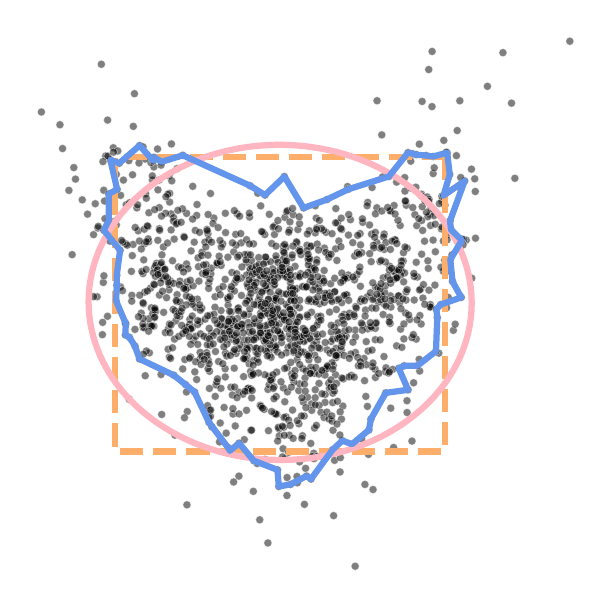}\label{fig:reg_prediction_sets}}}
    {\subfigure[Coverage]{\includegraphics[width=0.19\textwidth]{./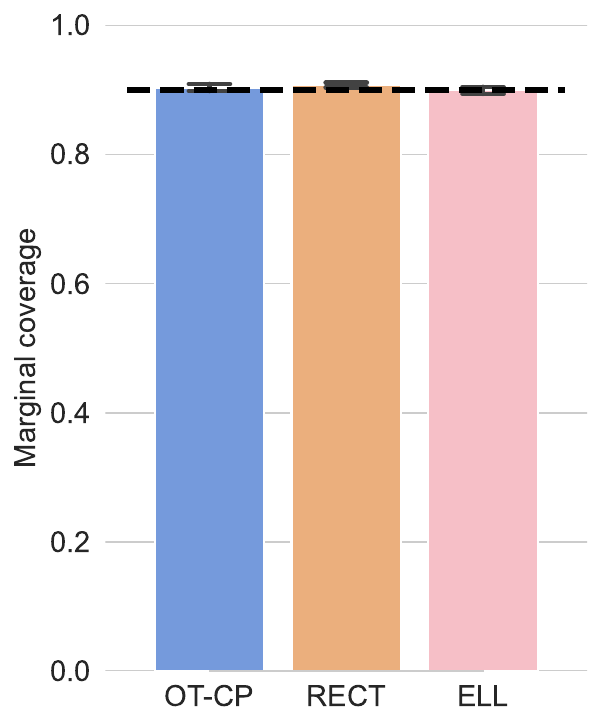}\label{fig:reg_coverage}}}   
    {\subfigure[Volumes]{\includegraphics[width=0.19\textwidth]{./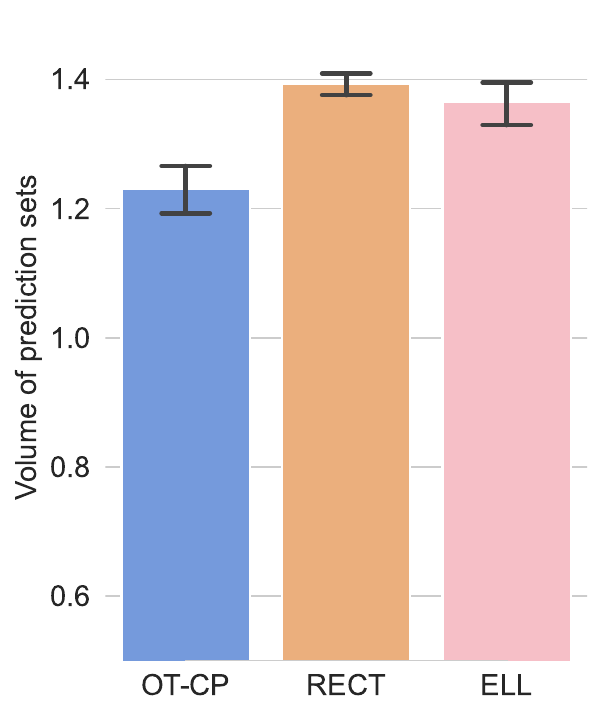}\label{fig:reg_efficiency}}}
    \caption{Conformal multi-output regression with OT-CP on simulated data}
    \label{FigIllustRegression}
\end{figure*}

\section{Multi-Output Regression} \label{sec:multireg}

This section examines the application of OT-CP for multi-output regression. 
First, we demonstrate how this approach accommodates arbitrary score distributions, enabling the creation of diverse and data-tailored prediction region shapes. Next, we introduce an extension of our method, called OT-CP+, which incorporates conditional adaptivity to input covariates. We demonstrate its effectiveness both empirically and theoretically, establishing an asymptotic coverage guarantee for OT-CP+.

\subsection{OT-CP can output non-convex prediction sets} \label{subsec:OTCP_reg}

For any feature vector $X \in \R^p$ and response vector $Y \in \R^d$, we aim to conformalize the prediction $\hat{f}(X)$ returned by a given black-box regressor. 

\paragraph{CP methods for multi-output regression.}\; %
To further motivate OT-CP in this context, we first review existing conformal strategies, that can be reinterpreted as leaning on multivariate quantile regions. 
A more comprehensive discussion can be found in \cref{RelatedWorksRegression}. 
One could consider vanilla CP relying on a univariate aggregated score, $s(x,y) = \Vert y - \hat{f}(x)\Vert$. This yields %
spherical prediction regions {$\{ \hat{f}(x) \} + \mathrm{Ball}_{\|\cdot\|} (\tau_\alpha)$\footnote{The expression involves the Minkowski sum between two sets: for two sets $A$ and $B$, $A+B=\{a+b, \, a\in A, \, b\in B\}$.}}
where $\mathrm{Ball}_{\|\cdot\|} (\tau_\alpha)$ is the Euclidean ball of radius $\tau_\alpha>0$.
One can also treat the $d$ components of $Y\in\R^d$ separately to produce prediction regions based on hyperrectangles,
$\{ \hat{f}(x) \} + \prod_{i=1}^d [a_i,b_i]$ \citep{neeven2018conformal}.
However, these approaches are often 
ill-suited to accurately capture the geometry
of multivariate distributions.
In particular, the output prediction sets (whether spherical or hyperrectangles) can be too large 
when handling anisotropic uncertainty that varies across different output dimensions.
To mitigate this, prior works have introduced scores that account for anisotropy and correlations among the residual dimensions, 
as with
ellipsoidal prediction sets \citep{messoudi2020conformal,johnstone2021conformal,henderson2024adaptive}. 
Still, this %
implicitly assumes an elliptical distribution for the non-conformity score, thereby compromising the distribution-free nature of the method.

\paragraph{OT-CP for multi-output regression.\;} Our strategy consists in applying OT-CP with a multivariate residual as the score, 
\begin{equation}\label{multivResiduals}
    s(x,y) = y - \hat{f}(x) \in \R^d\,,
\end{equation} 
and yields the following prediction regions
\begin{equation}\label{Calpha_reg}
    \forall x \in \R^p,\quad \Ca(x) =  \big\{\hat{f}(x) \big\} +   \wQn(\alpha)\,.
\end{equation} 
These sets can take on flexible, arbitrary shapes, that adapt to the calibration error distribution and %
the underlying data geometry. This key advantage is illustrated concretely in our numerical experiments below.

\paragraph{Numerical experiments.\;} In what follows, we study a practical regression problem and compare several CP methods described above: OT-CP for forming prediction regions as in \eqref{Calpha_reg}, a CP approach producing ellipses \citep[\texttt{ELL},][]{johnstone2021conformal}, and a simple method creating hyperrectangle \citep[\texttt{REC},][]{neeven2018conformal}, with the miscoverage level adjusted by the Bonferroni correction.
We simulate univariate inputs $X\sim \text{Unif}([0,2])$ with responses $Y\in \R^2$, and we assume that we are given a pre-trained predictor $\hat{f}(x)= (2x^2, (x+1)^2)$, $x \in \R$.
We interpret the score $s(X,Y)=Y-\hat{f}(X)$ as a random vector $\zeta$ distributed from a mixture of Gaussians and independent of $X$, meaning that the distribution of $s(X,Y)$ remains unchanged when conditioned on $X$. 
Quantile regions for $\alpha = 0.9$ are constructed using $n=1000$ calibration instances. More implementation details can be found in \Cref{appendix:expedetails}.
As expected, OT-CP prediction regions exhibit superior adaptability to the distribution of residuals, whereas hyperrectangles and ellipses tend to be overly conservative (\Cref{fig:reg_prediction_sets3d,fig:reg_prediction_sets}).
We also compare the methods in terms of empirical coverage on test data (\Cref{fig:reg_coverage}) and efficiency (volume of prediction regions, \Cref{fig:reg_efficiency}). 
While all approaches adhere to the $\alpha$-coverage guarantee OT-CP achieves %
greater efficiency, producing smaller and more precise prediction regions. 
This highlights that MK quantiles help effectively address uncertainty quantification challenges for multi-output regression.

\subsection{OT-CP+: an adaptive version}\label{secOTCPplus}

So far, the form of the constructed prediction regions \eqref{Calpha_reg} does not depend on the input %
$X$, as illustrated in \Cref{fig:reg_prediction_sets3d}. %
This uniformity stems from computing quantile regions over the distribution of scores $\{S_i\}_{i=1}^n$ \textit{marginalized} over $\{(X_i,Y_i)\}_{i=1}^n$.
In other words, %
$\{S_i\}_{i=1}^n$ are treated as i.i.d. realizations of $S = Y - \hat{f}(X)$. As a result, while the quantile regions provided by OT-CP effectively capture the global geometry of the scores, they do not adapt to variations in $X$. This lack of adaptivity 
is inadequate in applications where 
prediction uncertainties vary between input examples, as discussed by \citet{foygel2021limits}. The quest for adaptivity led to a rich and diverse literature, see \citet{chernozhukov2021distributional,gibbs2023conformal,sesia2021conformal,romano2020malice} and references therein, focusing mainly on the design of the univariate score function.
We also refer to the concomitant work of \citet{dheur2025multi} for a benchmark of CP methods in the case of multi-output regression. In the following, we propose a complementary perspective through conditional MK quantiles on multivariate scores to accommodate input adaptiveness.

\paragraph{Methodology.\;} To account for input-dependent uncertainty in the predictions, we introduce OT-CP+, a conformal procedure that computes \textit{adaptive} MK quantile region by leveraging multiple-output quantile regression \cite{del2024nonparametric}.  %
Consider the calibration data splitting into $\mathcal{D}_1$ and $\mathcal{D}_2$, as in step 2 of OT-CP.  Given a test point $X_{\rm test}$,  we compute the \textit{conditional} MK rank map $\RR_k(\cdot\vert X_{\rm test})$ based on the $k$-nearest neighbors in $\mathcal{D}_1$ of $X_{\rm test}$ (and a subsample of $k$ reference vectors). %
Given a coverage $\alpha \in (0,1)$, we calibrate the threshold $\rho_{\lceil (n_2+1)\alpha\rceil}$ using the conditional rank maps $\RR_k(\cdot\vert x)$ for the inputs $x$ associated with $\mathcal{D}_2$. The obtained quantile region is given by,
\begin{equation*} 
    \widehat{\mathcal{Q}}_k(\alpha \vert X_{\rm test}) = \left\{ s : \Vert \RR_k(s\vert X_{\rm test}) \Vert \leq \rho_{\lceil (n_2+1)\alpha\rceil} \right\}.
\end{equation*}
We defer to \Cref{app:detailedconditionalmethodo} for a detailed methodology. 
Hence, OT-CP+ relies on the distribution of $s(X,Y)$ given $X$, 
which is approximated on a neighborhood of $X$. %
The prediction regions returned %
by OT-CP+ are thus given by
\begin{equation}\label{Calpha_reg+}
    \widehat{\mathcal{C}}_{\alpha, k}(X_{\rm test}) =  \big\{\hat{f}(X_{\rm test}) \big\} +   \widehat{\mathcal{Q}}_k( \alpha \vert X_{\rm test})\,.
 \end{equation} 

\paragraph{Experiments on simulated data.\;} We first consider a similar setting as that of \Cref{subsec:OTCP_reg}, where the score $s(X,Y)$ is now distributed as $\sqrt{X} \zeta$. %
Consequently, the variance of the residual increases with $X$, which suggests that wider quantiles should be constructed for larger values of $X$. \Cref{FigIllustRegression2} confirms that OT-CP+ effectively constructs adaptive prediction %
regions with the desired $\alpha$-coverage. To quantify this more precisely, we evaluate the empirical coverage conditionally on $X$: 
\Cref{subfigHeteroscCoverage} reports box plots of $\mathbb{P}(Y_{\rm test} \in \Ca(X_{\text{test}}) \vert X_{\rm test } \in \mathcal{I})$
for several choices of subsets $\mathcal{I} \subset [0,2]$, showing that OT-CP+ satisfies approximate conditional coverage.
Note that 
computational time is larger for OT-CP+ than OT-CP, as it requires to solve multiple OT problems (see Figure \ref{fig:timeOTCP} in Appendix \ref{additionalexp}).

\begin{figure}[t]
    \centering
    \subfigure[Adaptive prediction regions]{\includegraphics[width=0.43\textwidth]{./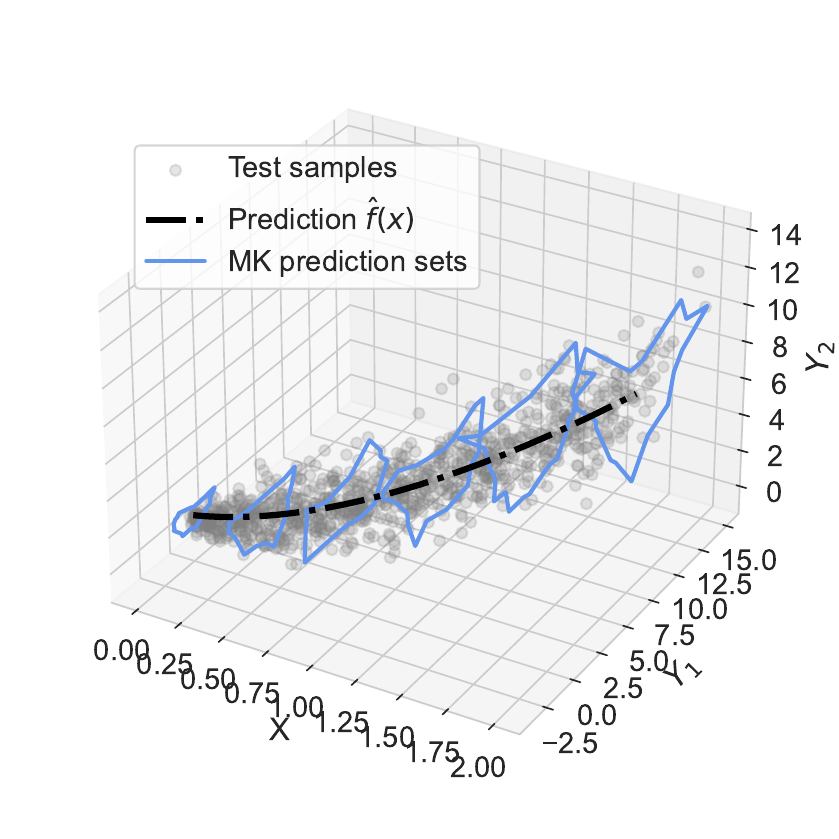}
    \label{subfigHeterosc}}
    \subfigure[Conditional coverage]{\includegraphics[width=0.33\textwidth]{./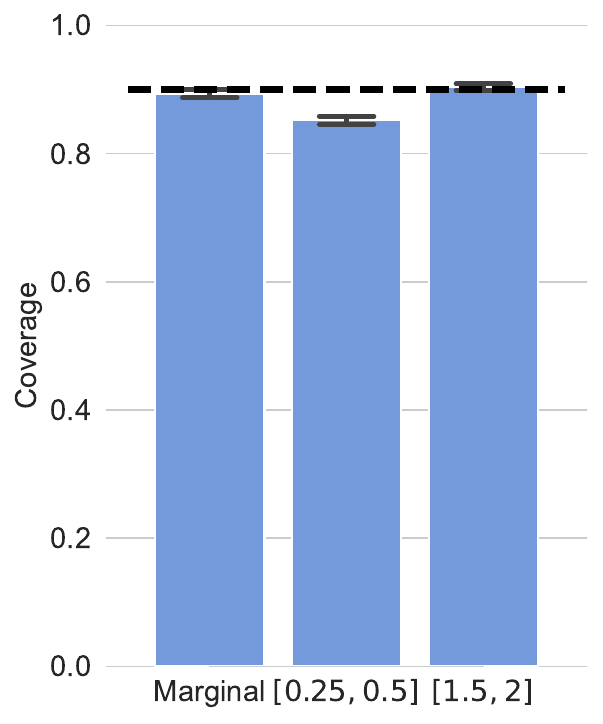}
    \label{subfigHeteroscCoverage}}
    \caption{Adaptive conformal regression with OT-CP+}
    \label{FigIllustRegression2}
\end{figure}

\paragraph{Experiments on real data.\;} Next, we evaluate OT-CP+ on real datasets sourced from Mulan \citep{tsoumakas2011mulan}, with dataset statistics summarized in \Cref{tab:datasets}.
We also implement a concurrent CP method \citep{messoudi2022ellipsoidal}, that is an adaptive extension of the previous ellipsoidal approach \citep{johnstone2021conformal}. Specifically, \citet{messoudi2022ellipsoidal} construct ellipsoidal prediction sets that account for local geometry, by estimating the covariance of $Y\vert X$ with the $k$-nearest neighbors ($k$NN) of $X$.

We split each dataset into training, calibration, and testing subsets (50\%--25\%--25\% ratio) and train a random forest model as the regressor.
Both methods use %
a $k$NN step that selects
10\% of the calibration set %
as neighbors for each test point $X_{\rm test}$.
As a coverage metric, we consider the \textit{worst-set coverage}, $\min_{j\in \{1,\dots,J\} } \mathbb{P}(Y_{\rm test} \in \Ca(X_{\text{test}}) \vert X_{\rm test } \in \mathcal{A}_j )$, 
with $\{ \mathcal{A}_j\}_{j\in \{1,\dots,J\}}$ a partition of the input space tailored to the test data. 
This metric is conceptually similar to the \textit{worst-slab coverage} %
\citep{cauchois2021knowing}, which considers specific partitions in the form of slabs.
In our approach, %
we obtain $J=5$ regions $\{ \mathcal{A}_j\}_{j\in \{1,\dots,5\}}$ by clustering, \ie employing (i) a random selection of centroids, and 
(ii) a $k$NN procedure ensuring that each region contains 10\% of the test samples. 
Empirical results presented in \Cref{RealDataComparison} provide evidence supporting the approximate conditional coverage achieved by OT-CP+. Indeed, the worst-set coverage of OT-CP+ remains consistently close to the target level $\alpha=0.9$ across all datasets, regardless of the sample size or the data dimension. This contrasts with the adaptive ellipsoidal approach, which does not achieve such $\alpha$-coverage and exhibits greater variability. 
We note, however, that OT-CP+ tend to overcover for datasets with the lowest sample sizes (namely `atp1d' and `jura', both with around 350 samples; see \Cref{tab:datasets}). We also report the marginal coverage, volume and computational time in \Cref{FigAddXPRegressionRealData}: our results show that OT-CP+ satisfies approximate conditional coverage at the price of larger set sizes on average and runtimes when compared with local ellipsoids.

\begin{figure}
\centering
\subfigure[Low data regime ($\leq 1500$ total samples)]{\includegraphics[scale=0.5]
{./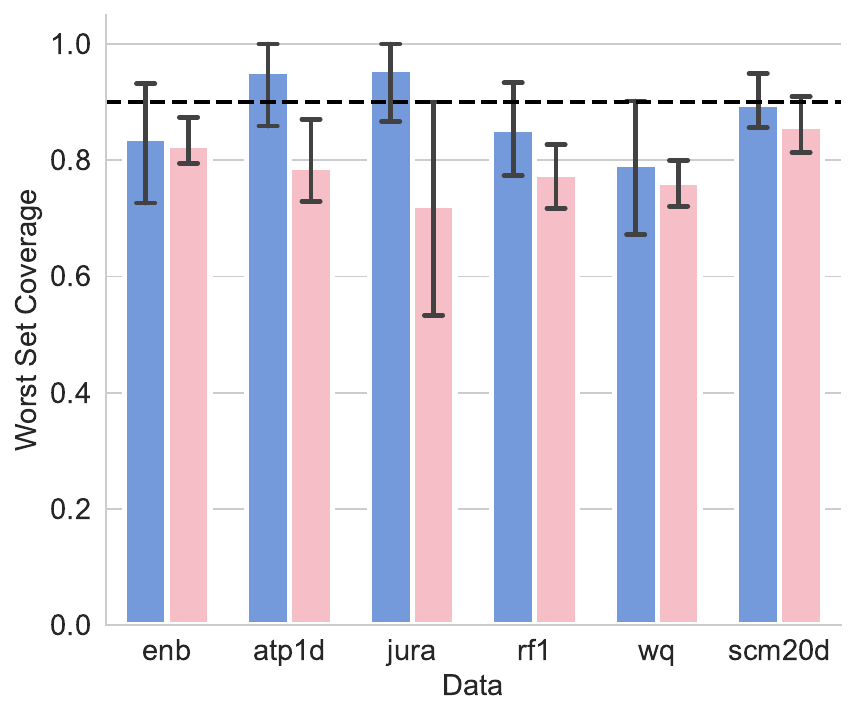} \label{RealDataSubFig1}}
\subfigure[Medium data regime ($9000$ total samples)]{\includegraphics[scale=0.5]%
{./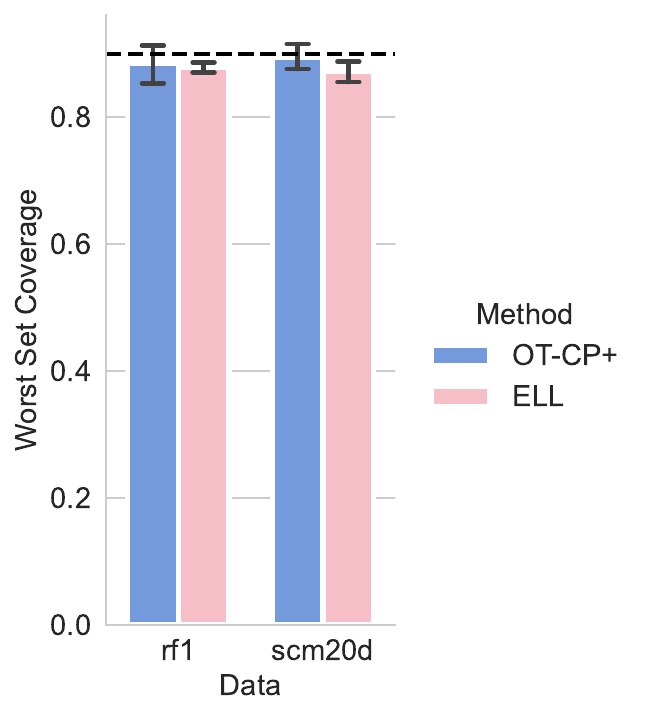}\label{RealDataSubFig2}}
    \caption{Conditional coverage on real datasets of two adaptive conformal procedures for multi-output regression}    \label{RealDataComparison}
\end{figure}

\paragraph{Asymptotic conditional coverage.\;}
In the one-dimensional case ($d=1$), \citet{lei2018distribution} established the inherent limitation of achieving exact distribution-free conditional coverage in finite samples. 
However, asymptotic conditional coverage remains attainable under regularity assumptions \citep{lei2018distribution,chernozhukov2021distributional}.
OT-CP+ benefits from such a guarantee, leveraging asymptotic properties of quantile regression for MK quantiles \citep{del2024nonparametric}. The following assumption is needed.

\begin{hyp}\label{hypRegularityConditional}
Suppose that $(X_1,Y_1),\dots,(X_n,Y_n)$, $(X_{\rm test}, Y_{\rm test})$ are $i.i.d$.
Assume that for almost every $x$, the distribution $\mathbb{P}_{S\vert X=x}$ of $s(X_{\rm test}, Y_{\rm test})$ given $X_{\rm test}=x$ is Lebesgue-absolutely continuous on its convex support $\mathrm{Supp} \big(\mathbb{P}_{S\vert X=x}\big)$.
For any $R>0$, suppose that its density $p(\cdot \vert x)$ verifies for all $s\in \mathrm{Supp} \big(\mathbb{P}_{S\vert X=x}\big) \cap \mathrm{Ball}_{\|\cdot\|} (R)$, $\lambda^x_R \leq p(s\vert x) \leq \Lambda^x_R$.
\end{hyp}

\begin{theorem}\label{thmConditionalCoverage}
Let $k$ be the number of nearest neighbors used to estimate $\RR_k(\cdot\vert x)$. Assume that $k\rightarrow +\infty$ and $k/n_1 \rightarrow 0$ as $n_1\rightarrow +\infty$. 
Under \Cref{hypRegularityConditional}, the following holds for any $\alpha \in [0,1]$, as $n_1,n_2\rightarrow +\infty$, 
\begin{equation}\label{asymptoticConditionalCoverage}
    \mathbb{P} \big( Y_{\rm test} \in \widehat{\mathcal{C}}_{\alpha, k}(X_{\rm test}) \big\vert X_{\rm test}\big) 
    \overset{\mathbb{P}}{\longrightarrow}\alpha\,,
\end{equation}
where $\widehat{\mathcal{C}}_{\alpha, k}(x)  =  \big\{\hat{f}(x) \big\} +   \widehat{\mathcal{Q}}_k(\alpha  \vert x)$ depends on $\hat{f}$ previously learned on (fixed) training data. 
\end{theorem}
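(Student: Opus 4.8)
The endpoint cases are immediate: for $\alpha=0$ one has $\lceil(k+1)\alpha\rceil=0$, so $\widehat{\mathcal{Q}}_k(\cdot\vert x)$ is empty because $\Vert\RR_k(\cdot\vert x)\Vert\ge 1/k$, and for $\alpha=1$ one has $\lceil(k+1)\alpha\rceil/k=(k+1)/k>1$, so $\widehat{\mathcal{Q}}_k(\cdot\vert x)=\R^d$; in both cases the conditional coverage equals $\alpha$ exactly. So fix $\alpha\in(0,1)$ and set $\beta_k:=\lceil(k+1)\alpha\rceil/k\to\alpha$. The first move is a reduction. Since $\hat f$ is fixed, $Y_{\rm test}\in\widehat{C}_{\alpha,k}(X_{\rm test})$ if and only if the test score $S_{\rm test}:=Y_{\rm test}-\hat f(X_{\rm test})$ lies in $\widehat{\mathcal{Q}}_k(\beta_k\vert X_{\rm test})$; and, conditionally on $X_{\rm test}=x$, $S_{\rm test}$ has law $\mathbb{P}_{S\vert X=x}$ and is independent of the calibration sample, hence of the random set $\widehat{\mathcal{Q}}_k(\beta_k\vert x)$ (a measurable function of the $k$ nearest neighbours of $x$ in the calibration set). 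Therefore
\[
\mathbb{P}\big(Y_{\rm test}\in\widehat{C}_{\alpha,k}(X_{\rm test})\,\big\vert\,X_{\rm test}=x\big)=\mathbb{E}\big[\,\mathbb{P}_{S\vert X=x}\big(\widehat{\mathcal{Q}}_k(\beta_k\vert x)\big)\,\big],
\]
where the expectation is over the calibration sample. Since the integrand lies in $[0,1]$, by bounded convergence it suffices to prove that $\mathbb{P}_{S\vert X=x}\big(\widehat{\mathcal{Q}}_k(\beta_k\vert x)\big)\to\alpha$ for $\mathbb{P}_X$-a.e.\ $x$, in probability (or a.s.) over the calibration sample.

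Next I would identify the limiting object. Fix $x$ satisfying \Cref{hypRegularityConditional} and let $\RR(\cdot\vert x)$ be the population conditional MK rank map, i.e.\ the ($\mathbb{P}_{S\vert X=x}$-a.e.\ unique) gradient of a convex function transporting $\mathbb{P}_{S\vert X=x}$ onto the spherical-uniform reference $\mathcal{U}_d$ on the unit ball, which is the weak limit of $\frac1k\sum_{i\le k}\delta_{U_i}$; existence and essential uniqueness follow from McCann's theorem under the absolute continuity in \Cref{hypRegularityConditional}. Because $\Vert U\Vert$ is uniform on $[0,1]$ when $U\sim\mathcal{U}_d$, the key identity is that for every $\gamma\in[0,1]$, $\mathbb{P}_{S\vert X=x}(\{s:\Vert\RR(s\vert x)\Vert\le\gamma\})=\mathcal{U}_d(\{u:\Vert u\Vert\le\gamma\})=\gamma$, continuous in $\gamma$. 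The consistency of the localized estimator is where the assumptions $k\to\infty$ and $k/n\to0$ enter: the first guarantees enough neighbours for the empirical transport to stabilize, the second kills the $k$NN localization bias so that the scores of the $k$ nearest neighbours of $x$ behave like a sample from $\mathbb{P}_{S\vert X=x}$. Under \Cref{hypRegularityConditional} this is exactly the content of the nonparametric multiple-output quantile regression consistency of \citet{del2024nonparametric}: $\sup_{s\in K}\Vert\RR_k(s\vert x)-\RR(s\vert x)\Vert\to0$ (a.s., resp.\ in probability) for every compact $K$ in the continuity set of $\RR(\cdot\vert x)$.

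With this in hand, the plan is a sandwiching argument. Fix $\varepsilon>0$, choose $R$ with $\mathbb{P}_{S\vert X=x}\big(\mathrm{Ball}_{\|\cdot\|}(R)^{c}\big)<\varepsilon$, and fix a small $\delta>0$. By the previous paragraph, with probability tending to $1$ one has $\sup_{s\in\mathrm{Ball}_{\|\cdot\|}(R)}\Vert\RR_k(s\vert x)-\RR(s\vert x)\Vert<\delta$, and on that event
\[
\{s\in\mathrm{Ball}_{\|\cdot\|}(R):\Vert\RR(s\vert x)\Vert\le\beta_k-\delta\}\subseteq\widehat{\mathcal{Q}}_k(\beta_k\vert x)\cap\mathrm{Ball}_{\|\cdot\|}(R)\subseteq\{s:\Vert\RR(s\vert x)\Vert\le\beta_k+\delta\}.
\]
Applying $\mathbb{P}_{S\vert X=x}$, using the key identity and $\mathbb{P}_{S\vert X=x}(\mathrm{Ball}_{\|\cdot\|}(R)^c)<\varepsilon$, one gets $\beta_k-\delta-\varepsilon\le\mathbb{P}_{S\vert X=x}(\widehat{\mathcal{Q}}_k(\beta_k\vert x))\le\beta_k+\delta+\varepsilon$ eventually; letting $k\to\infty$ (so $\beta_k\to\alpha$), then $\delta\downarrow0$ and $\varepsilon\downarrow0$, yields $\mathbb{P}_{S\vert X=x}(\widehat{\mathcal{Q}}_k(\beta_k\vert x))\to\alpha$. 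Combined with the reduction above and bounded convergence, this gives \eqref{asymptoticConditionalCoverage}.

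The main obstacle is the consistency step: importing the estimator's convergence from \citet{del2024nonparametric} with the right mode of convergence under precisely \Cref{hypRegularityConditional}, together with the control of the $k$NN bias (the joint role of $k\to\infty$ and $k/n\to0$). The remaining subtleties are analytic rather than conceptual — handling the tails of $\mathbb{P}_{S\vert X=x}$ when its support is unbounded, which is why the truncation to $\mathrm{Ball}_{\|\cdot\|}(R)$ is needed, and accommodating possible discontinuities of the population rank map $\RR(\cdot\vert x)$ near the boundary of the support or at the barycenter, which is why the sandwich only compares $\widehat{\mathcal{Q}}_k(\beta_k\vert x)$ against super-/sub-level sets inflated by $\pm\delta$ and why the uniform convergence is invoked only on compact subsets of the continuity set.
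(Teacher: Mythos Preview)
Your proposal is correct but takes a different route from the paper. The paper's proof invokes Corollary~3.4 of \citet{del2024nonparametric} as a black box, which already delivers, for every \emph{fixed} level $\tau$,
\[
\lim_{n,k\to+\infty}\mathbb{P}\big(s(X_{\rm test},Y_{\rm test})\in\widehat{\mathcal{Q}}_k(\tau\vert X_{\rm test})\,\big\vert\,X_{\rm test}\big)=\tau,
\]
so the only remaining work is to accommodate the moving level $(1+\tfrac1k)\alpha$. This is done via the elementary monotone nesting of MK quantile regions: for large $k$ one sandwiches $\widehat{\mathcal{Q}}_k\big((1+\tfrac1k)\alpha\,\vert\,x\big)$ between $\widehat{\mathcal{Q}}_k(\alpha\,\vert\,x)$ and $\widehat{\mathcal{Q}}_k\big((1+\tfrac1{2N_1})\alpha\,\vert\,x\big)$ for a fixed $N_1$, applies the black-box limit at those two fixed levels, and finally lets $N_1\to\infty$. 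You instead work one level lower: you import uniform convergence of the rank map $\RR_k(\cdot\vert x)\to\RR(\cdot\vert x)$ on compacts, sandwich the \emph{sets} $\widehat{\mathcal{Q}}_k(\beta_k\vert x)$ between population sublevel sets $\{\Vert\RR(\cdot\vert x)\Vert\le\beta_k\pm\delta\}$, and conclude via the calibration identity $\mathbb{P}_{S\vert X=x}\big(\Vert\RR(\cdot\vert x)\Vert\le\gamma\big)=\gamma$; the drift $\beta_k\to\alpha$ is absorbed for free inside the $\pm\delta$ slack. Your argument is more self-contained (it essentially re-proves what the cited corollary asserts) but demands a stronger input from \citet{del2024nonparametric}---uniform convergence of the map rather than convergence of the region's probability---and carries the truncation and boundary bookkeeping you correctly flag. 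The paper's route is shorter and requires only the higher-level corollary together with the nestedness property.
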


We note that OT-CP+ is not the only way to make OT-CP adaptive: our proposed methodology aims to demonstrate that conditional coverage can be achieved with only a slight modification of the generic OT-CP framework. In addition to being easy to implement, the added $k$-NN step allows us to leverage established results on the consistency of quantiles \citep{biau2015lectures,del2024nonparametric}, which serve as the foundation for our \Cref{thmConditionalCoverage}.

\section{Classification} \label{sec:classif}

In this section, we apply OT-CP to multiclass classification. Each data point consists of a feature-label pair $(X,Y) \in \R^p \times \{1, \dots, K\}$, with $K \geq 3$ the number of classes. 
The given black-box classifier outputs, for 
any input $X \in \R^p$, a vector $\hat{\pi}(X)$ of estimated class probabilities, where the $k$-th component $\hat{\pi}_k(X)$ is the probability estimate that $X$ belongs to class $k$ (hence, $\sum_{k=1}^K\hat{\pi}_k(X) = 1$).

\paragraph{CP methods for classification.\;}
Commonly used scores for multiclass classification include the Inverse Probability (IP), $s(x,y) = 1 - \hat{\pi}_y(x)$ and the Margin Score (MS), $s(x,y) = \max_{y'\ne y}\hat{\pi}_{y'}(x) - \hat{\pi}_y(x)$ \citep{johansson2017model}. 
IP only considers the probability estimate for the correct class label ($\hat{\pi}_y(x)$), whereas MS also involves the most likely incorrect class label ($\max_{y'\ne y}\hat{\pi}_{y'}(x)$).
More adaptive options argue in favor of incorporating more class labels in the score function \citep{romano2020classification,angelopoulos2021uncertainty,melki2024penalized}. 
The idea is to rank the labels from highest to lowest confidence %
(by sorting the probability estimates as $\hat{\pi}_{(y_1)}(x) \geq \hat{\pi}_{(y_2)}(x) \geq \cdots  \geq \hat{\pi}_{(y_K)}(x)$), then return the labels such that the total confidence (\ie the cumulative sum) is at least $\alpha$. 
It is worth noting that this strategy stems from a notion of \textit{generalized conditional quantile function} \citep{romano2020classification}, 
by analogy with $\inf_{c\in\R}\{ \mathbb{P}(Y\leq c\vert X=x) \geq \alpha \}$.

\paragraph{OT-CP for multiclass classification.\;} As an alternative CP method for this problem, we propose using OT-CP with the following multivariate score,
\begin{equation}\label{MultivScoreClassif}
    s(X,Y) = \vert \Bar{Y} - \hat{\pi}(X) \vert \in \R_+^K\,,
\end{equation}
where the absolute value is taken component-wise and $\bar{Y} = (\mathds{1}_{Y=k})_{k=1}^K$ denotes the one-hot encoding of $Y$. %
One can remark in passing that $\Vert s(x,y) \Vert_1 = 2(1 - \hat{\pi}_y(x))$, which corresponds to the aforementioned IP scalar score. Our OT-CP procedure builds upon generalized quantiles to take into account \textit{all} the components of $\hat{\pi}_y(x)$ (and not only the largest values) and to capture the correlations between them. 
\begin{figure}[t]
    \centering
    \subfigure[$s_i = y_i - \hat{\pi}(x_i)$]{\includegraphics[width=0.32\textwidth]{./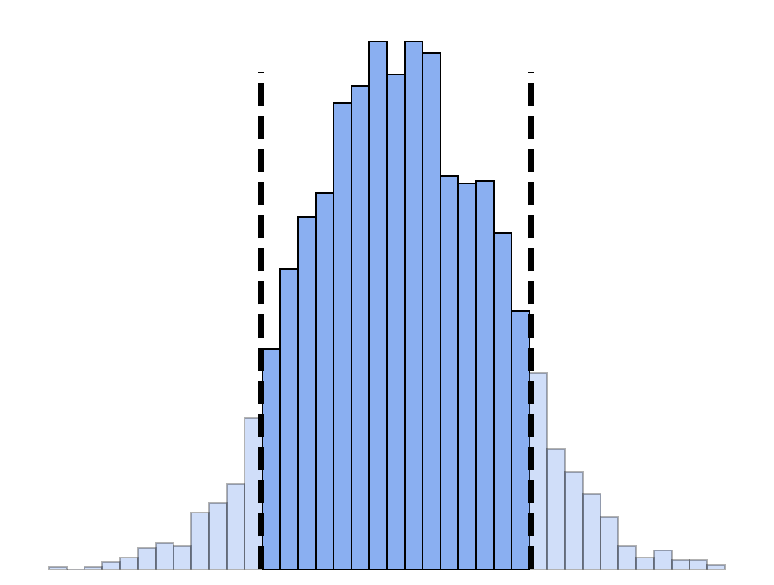}
    \label{ChangingRefDistrib1}}
    \subfigure[$s_i^+ = \vert y_i - \hat{\pi}(x_i)\vert$]{\includegraphics[width=0.32\textwidth]{./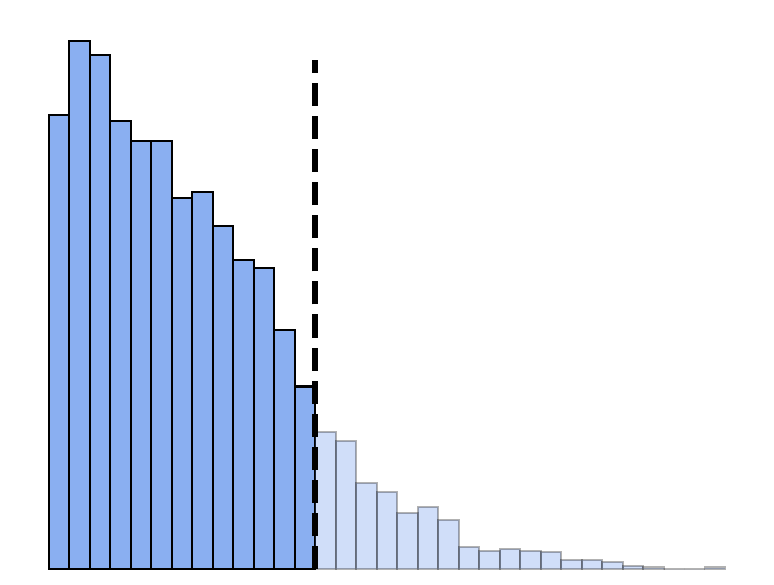}
    \label{ChangingRefDistrib2}}
    \caption{Ordering must depend on the chosen scores: (a) Center-outward for signed errors, (b) Left-to-right for absolute errors}
    \label{ChangingRefDistrib}
\end{figure}
The score in \eqref{MultivScoreClassif} takes values in $\R^K_+$ and naturally induces a \textit{left-to-right} ordering. This contrasts with the score function used in our previous application, multi-output regression, where the ordering is center-outward. To further clarify this difference, let us focus on a single component of the score, $s(x,y)_k$, for simplicity. A center-outward interval of the form of $[q_{\alpha/2},q_{1-\alpha/2}]$ applied to $s(x,y)_k$ excludes lower values from $[0,q_{\alpha/2})$ (\Cref{ChangingRefDistrib1}). This exclusion is problematic for the score structure induced by \eqref{MultivScoreClassif}, since lower values of $s(x,y)_k$ indicate greater conformity between $x$ and the ground-truth $y$. In this context, left-to-right ordering is more appropriate, as illustrated in \Cref{ChangingRefDistrib2}. 

\begin{figure}[t]
    \centering
    {\subfigure[Multivariate scores $\{S_i\}_{i=1}^n$ corresponding to absolute errors in $\R^d_+$]{\includegraphics[width=0.3\textwidth]{./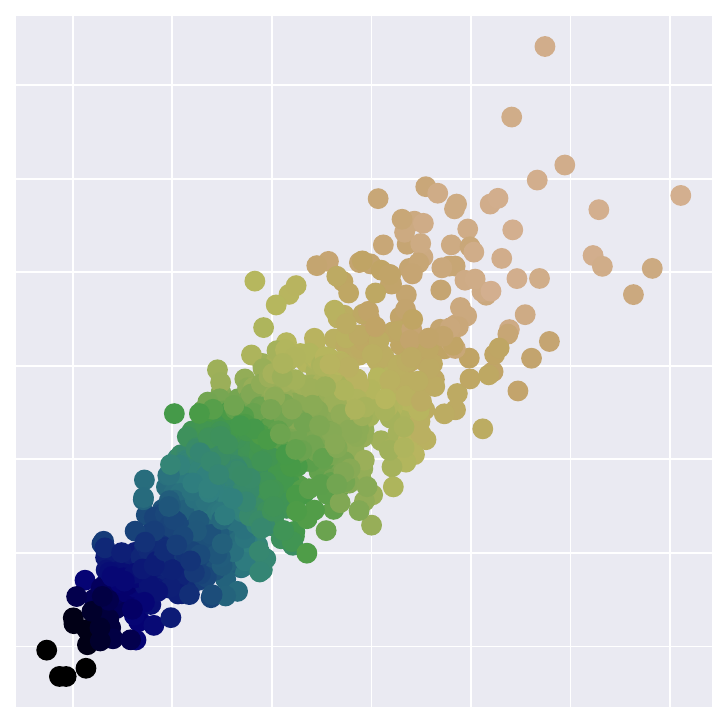}}}
    \hspace{0.2cm}
    {\subfigure[Positive reference rank vectors $\{U_i\}_{i=1}^n$]{\includegraphics[width=0.3\textwidth]{./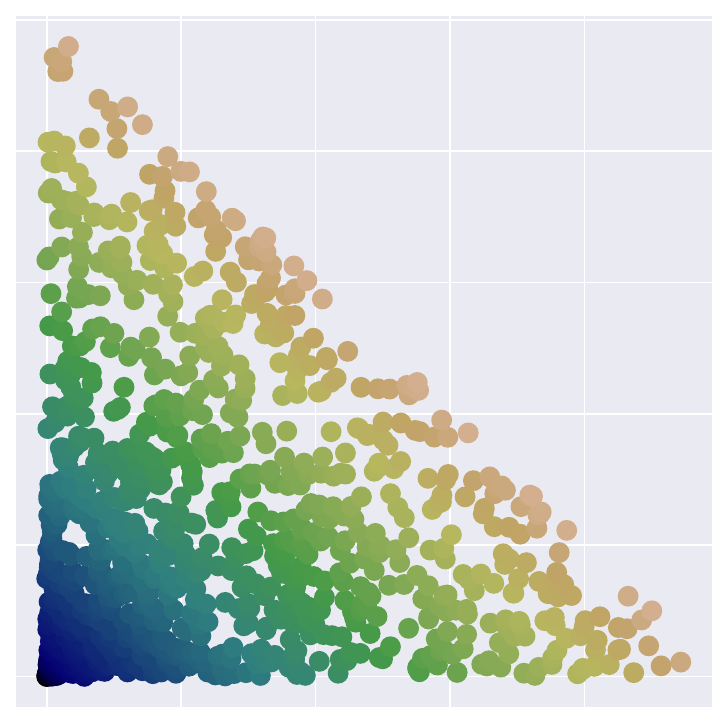}}}
    \caption{Positive reference ranks for a left-to-right ordering. The colormap encodes how the 2-dimensional scores $\{S_i\}_{i=1}^n$ in (a) are transported onto the reference rank vectors $\{U_i\}_{i=1}^n$ in (b)}
    \label{figure_methodo_positive}
\end{figure}

A left-to-right ordering can be easily achieved by making a slight adjustment to \Cref{defMKRanks}: we choose the reference rank vectors as $U_i = \frac{i}{n} \theta_i^+$, where $\theta_i^+$ is uniformly sampled in $\{ \theta\in\R^d_+:\Vert \theta \Vert_1 =1 \}$. 
As depicted in \Cref{figure_methodo_positive}, the resulting MK ranks reflect the desired left-to-right ordering. 

\begin{figure*}[t]
    \centering  
    {\subfigure[Simulated data]{\includegraphics[height=0.24\textwidth]{./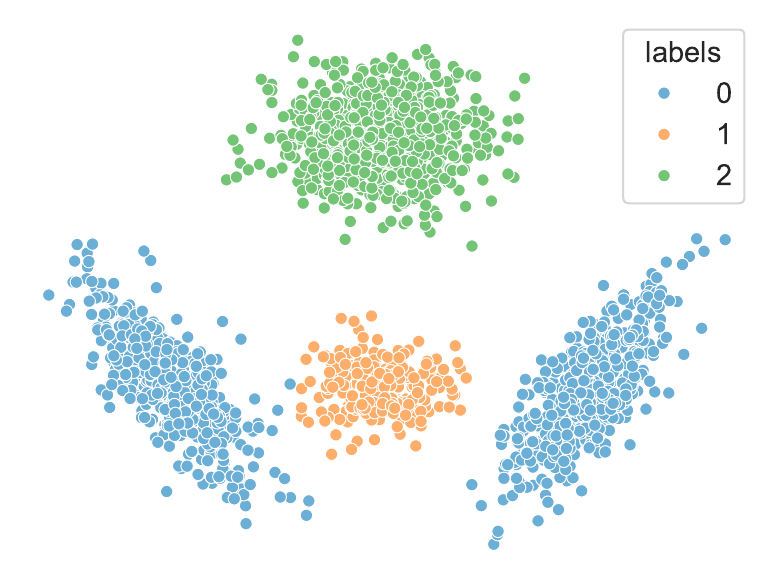}\label{SimuDataClassif1}}}
    {\subfigure[WSC coverage]{\includegraphics[height=0.22\textwidth]{./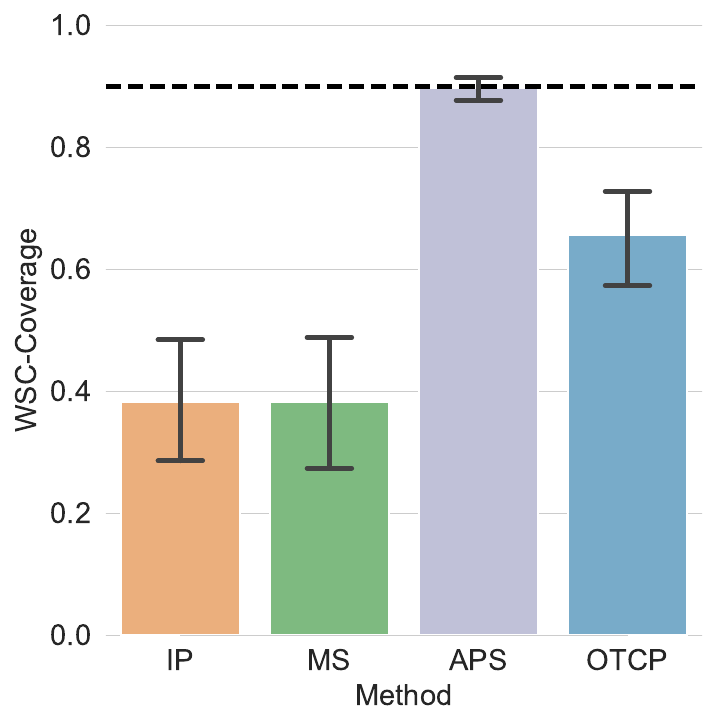}\label{SimuDataClassif2}}}
    {\subfigure[Average size ]{\includegraphics[height=0.22\textwidth]{./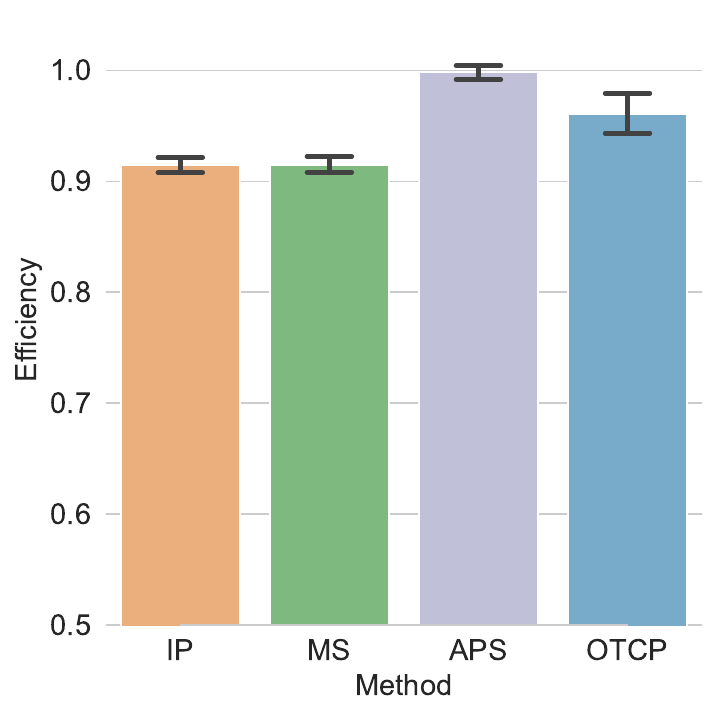}\label{SimuDataClassif3}}}   
    {\subfigure[Informativeness]{\includegraphics[height=0.22\textwidth]{./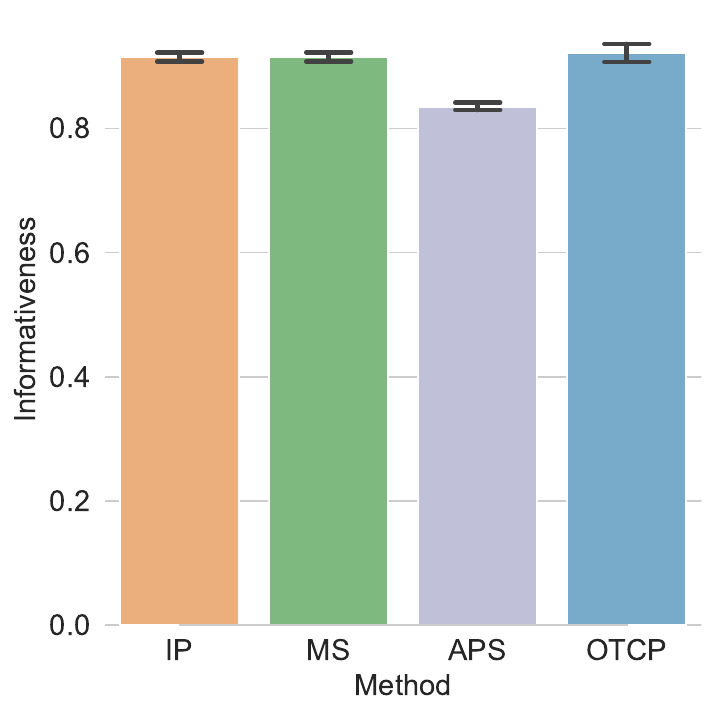}\label{SimuDataClassif4}}}
    \caption{Conformal classification by Quadratic Discriminant Analysis on simulated data}
    \label{SimuDataClassif}
\end{figure*}

It is worth noting that this adjustment is fully compatible with the general definition of MK quantiles, which is flexible enough to accommodate arbitrary reference distributions (see \Cref{remarkchoiceReference}). 
Based on this choice of score \eqref{MultivScoreClassif} and reference rank vectors, OT-CP generates the following prediction sets,
$$
    \Ca(x)\!=\!\Big\{ \bar{y} \in \{0,1\}^K : \vert \Bar{y} - \hat{\pi}(x) \vert \in \wQn (\alpha) \Big\}\,,
$$
where the region $\wQn(\cdot)$ is constructed from $\{U_i\} = \{\frac{i}{n} \theta_i^+\}$.

\paragraph{Numerical experiments.\;}
We compare OT-CP against IP, MS and APS 
scores
in terms of worst-case coverage (WSC, measuring conditional coverage, as proposed in \citet{romano2020classification}), efficiency (average size of the predicted set) and informativeness (average number of predicted singletons). 
More implementation details are given in \Cref{appendix:expedetails}.

We start by simulating data according to a Gaussian mixture model, represented in \Cref{SimuDataClassif1} and we consider a pre-trained classifier based on Quadratic Discriminant Analysis. 
\Cref{SimuDataClassif2,SimuDataClassif3,SimuDataClassif4} outline that OT-CP successfully retains the efficiency and informativeness—hallmarks of IP and MS—while simultaneously enhancing conditional coverage on $X$, akin to the improvements achieved by APS.
These results highlight that OT-CP effectively handles arbitrary probability profiles by leveraging the entire softmax output, rather than relying solely on its sum, to construct more informative and meaningful prediction sets. 

\begin{figure*}[h]
    \centering
    \subfigure[Average size]{\includegraphics[width=0.4\textwidth]{./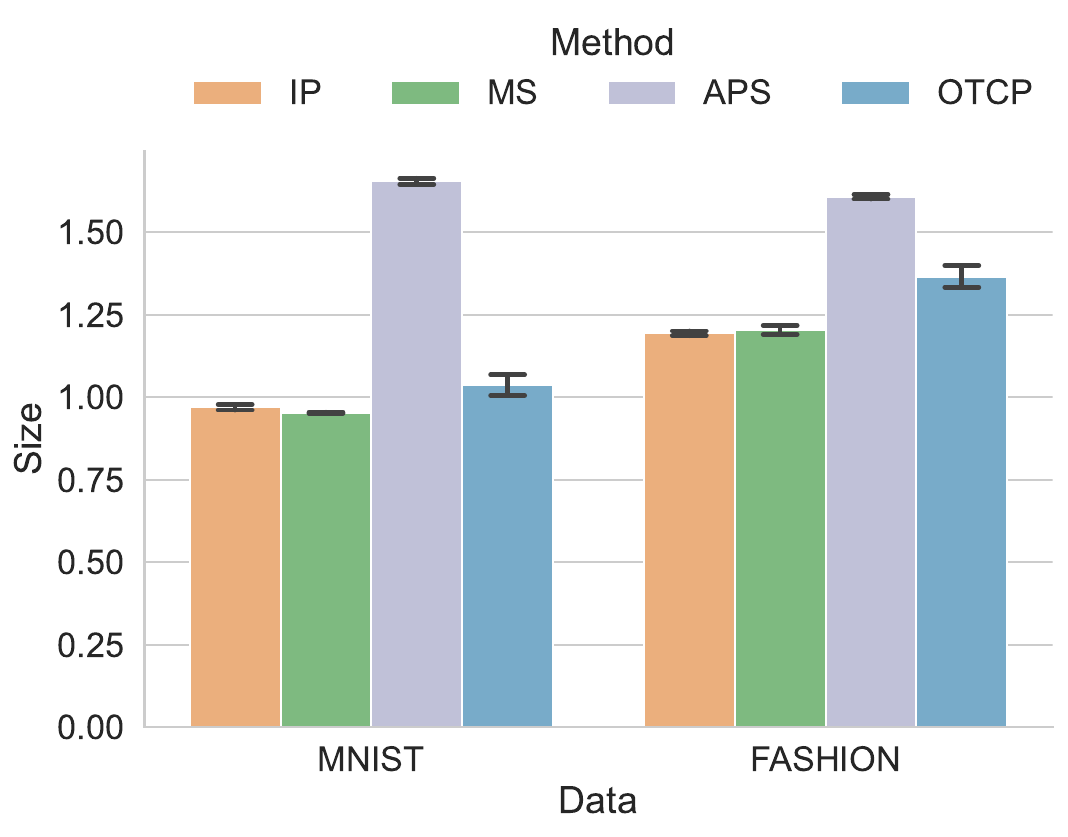}}
    \subfigure[Informativeness]{\includegraphics[width=0.4\textwidth]{./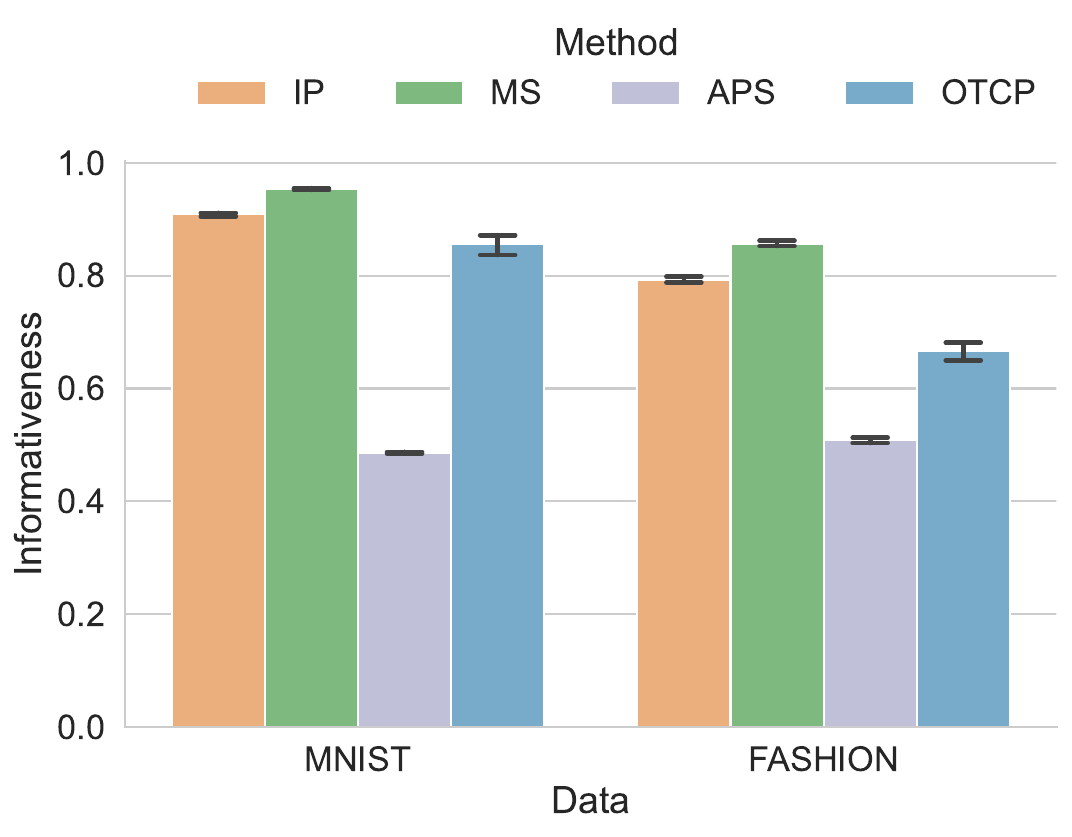}}
    \caption{Classification metrics on MNIST and Fashion-MNIST, results averaged over the 10 labels}
    \label{FigAvgResultsBothClassif}
\end{figure*}
\begin{figure*}[h]
    \centering
    \subfigure[MNIST]{\includegraphics[width=0.4\textwidth]{./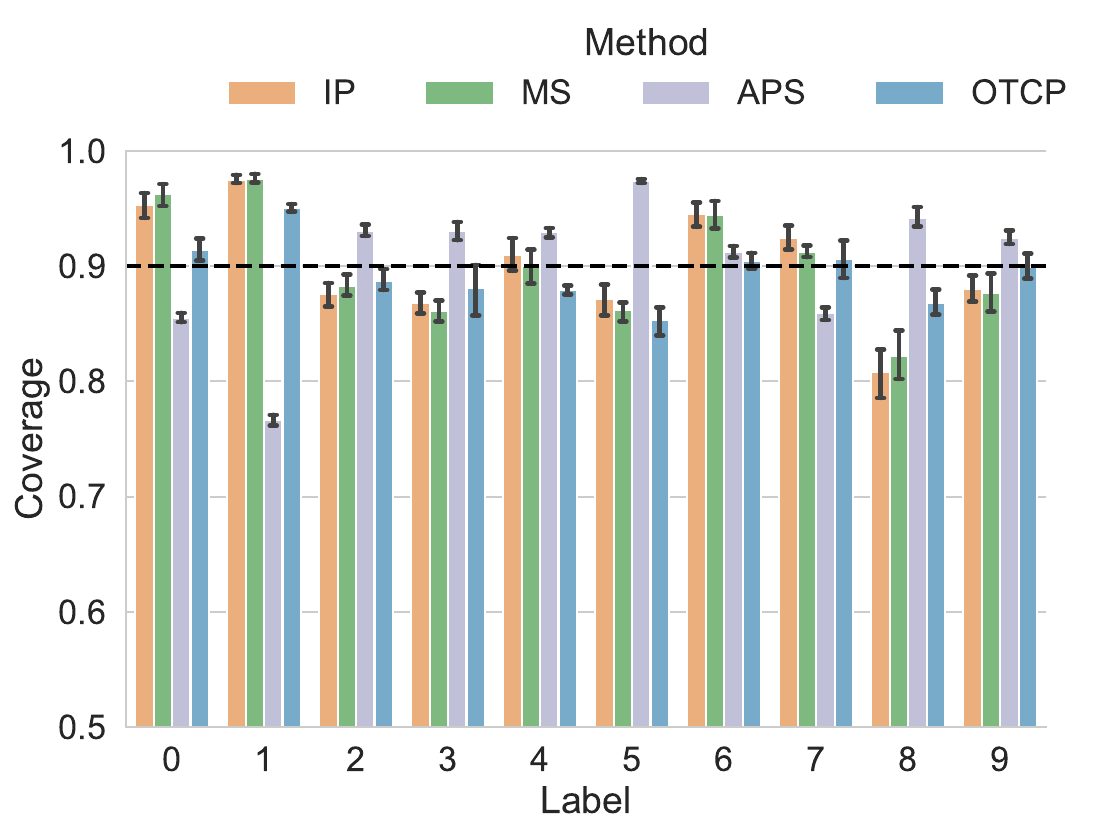}}
    \subfigure[Fashion-MNIST]{\includegraphics[width=0.4\textwidth]{./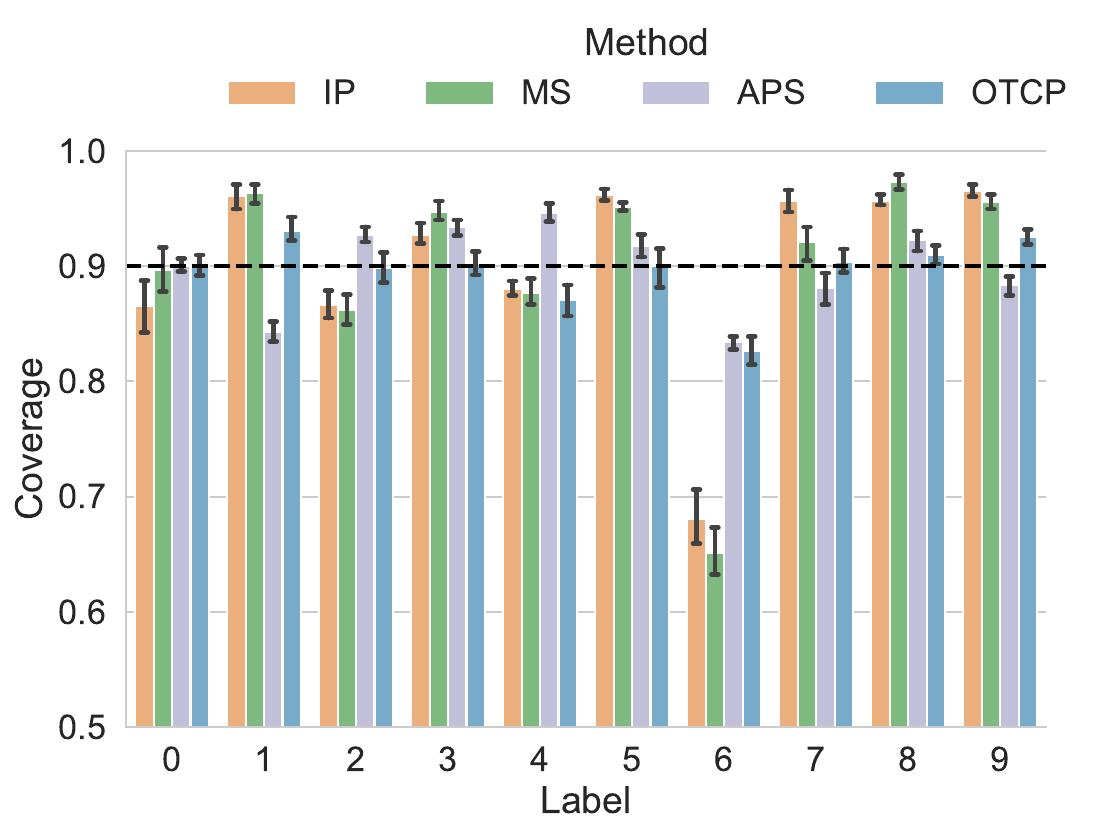}}
    \caption{Label-wise coverage on $K=10$ classes of MNIST and Fashion-MNIST}
    \label{FigCondtoy_ResultsBothClassif}
\end{figure*}

The relevance of OT-CP is also confirmed on real datasets.
In \Cref{FigAvgResultsBothClassif} and 
\Cref{FigCondtoy_ResultsBothClassif}, 
we present the results for a random forest on MNIST and 
Fashion-MNIST. 
Additional numerical experiments are provided in \Cref{appendix:expedetails}.
Interestingly, despite not being explicitly designed for this purpose, OT-CP achieves conditional coverage with respect to the label on par with APS, where IP and MS fall short, as highlighted in \Cref{FigCondtoy_ResultsBothClassif}. 
In addition, OT-CP maintains the efficiency and informativeness of IP and MS, offering a convenient balance across all the considered metrics, as one can observe in \Cref{FigAvgResultsBothClassif}.
We finally emphasize that the numerical experiments were designed as prototypes to demonstrate how OT-CP can be seamlessly and effectively adapted to typical classification tasks. The focus is on demonstrating a useful application of our general framework, 
which already shows several benefits while remaining conceptually simple. 

\section{Conclusion and perspectives} 

We have introduced a general and versatile framework for conformal prediction grounded in optimal transport theory. This approach not only revisits classical CP methods based on scalar scores, but also extends easily to handle multivariate scores in a novel and robust manner, thanks to the inherent properties of Monge-Kantorovich quantiles. The OT-CP methodology is flexible, enabling the construction of prediction regions tailored to diverse scenarios, besides being well-suited to capture complex uncertainty structures.

However, in the setting of multi-output regression, our approach may be observed to be more conservative than the ellipsoidal one. 
We identify opportunities for improvement in this direction by employing more suitable reference distributions for instance. Moreover, the flexibility of the approach, rooted in the Monge-Kantorovich quantile formulation, comes at the cost of increased computational complexity compared to conformal methods based on univariate scores. 
An alternative transport-based method to achieve better computational and statistical efficiency  with respect to the scale of the problem is to use entropic maps, as in the concomitant work by \citet{klein2025multivariate}. 

In addition, the OT-CP methodology could be adapted to new learning tasks.
One might think of multi-label classification where the multivariate score \eqref{MultivScoreClassif} immediately applies by replacing the one-hot encoding by a multi-hot encoding, see, \eg \citet{katsios2024multi} for related ellipsoidal inference. 
One could also explore the development of more sophisticated multivariate scores, potentially building on existing alternatives (\eg \citet{tumu2024multi,pmlr-v206-wang23n,plassier2024probabilistic} for regression; and \citet{angelopoulos2021uncertainty,melki2024penalized} for classification). 
Indeed, our numerical experiments demonstrate that basic multivariate scores can outperform classical univariate counterparts, providing a supplementary motivation for pursuing into this direction.

\section*{Impact Statement}

This paper presents work whose goal is to advance the field of 
Machine Learning. There are many potential societal consequences 
of our work, none which we feel must be specifically highlighted here.

\bibliography{multivCP}
\bibliographystyle{apalike}

\newpage
\appendix
\onecolumn

\section{Detailed methodology for OT-CP+}
\label{app:detailedconditionalmethodo}

The OT-CP+ procedure is described in detail below. 
\begin{enumerate}
    \item \textbf{Multivariate score computation:} Compute the multivariate scores $\big( s(X_i,Y_i) \big)_{i=1}^n$ on the calibration set $(X_i, Y_i)_{i=1}^n$.
    \item \textbf{Conditional quantile region construction:}
    \begin{enumerate}
    \item Split the calibration set into $\mathcal{D}_1$ and $\mathcal{D}_2$ of respective sizes $n_1$  and $n_2$ such that $n_1 + n_2 = n$. In what follows, for any $x$, we refer to $\RR_{k}(\cdot | X=x)$ as the conditional MK rank map based on the $k$-nearest neighbors of $x$ in $\mathcal{D}_1$ (using $k$ reference vectors). 
    \label{test}
    \item For each $(X_i,Y_i)\in \mathcal{D}_2$, compute the conditional MK rank map $\RR_{k}(\cdot | X=X_i)$, then compute $\rho_{\lceil(n_2+1)\alpha\rceil}$ as the $\lceil(n_2+1)\alpha\rceil$-th smallest element of, 
    $$\{\| \RR_{k}(s(X_i,Y_i) \vert X = X_i)\|\,:\,(X_i,Y_i) \in \mathcal{D}_2 \} \,.$$
    \item For any $x$, compute the conditional MK rank map $\RR_{k}(\cdot | X=x)$ and define the conditional MK quantile region as
    \begin{equation}\label{defCalpha2}
    \widehat{\mathcal{Q}}_k(\alpha \vert X=x) = \Big\{ s : \Vert \RR_{k}(s \vert X = x) \Vert \leq 
    \rho_{\lceil(n_2+1)\alpha\rceil}
    \, \Big\}.
    \end{equation}
    \end{enumerate}
    \item \textbf{Prediction set computation:} For a test input $X_{\rm test}$ drawn from the same distribution as the calibration set, return the prediction region %
    $$\widehat{\mathcal{C}}_{\alpha,k}(X_{\rm test})\!=\!\Big\{ y \in \mathcal{Y} : s(X_{\rm test},y) \in \widehat{\mathcal{Q}}_k ( \alpha \vert X = X_{\rm test}) \Big\} \,.$$
\end{enumerate}

Hence, the main difference from OT-CP lies in using the procedure by \citet{del2024nonparametric} to compute conditional MK rank maps $\RR_k(\cdot | X=x)$, thereby providing greater adaptivity compared to the standard MK rank map $\RR_n(\cdot)$ defined in \eqref{defRn}. 

\section{Proofs}

This section contains the detailed proofs of our theoretical results. For clarity, and without loss of generality, we assume that the calibration set $\{(X_i, Y_i)\}_{i=1}^n$ is split into $\mathcal{D}_1 = \{(X_i, Y_i)\}_{i=1}^{n_1}$ and $\mathcal{D}_2 = \{(X_{n_1+i}, Y_{n_1+i})\}_{i=1}^{n_2}$, where $n_1 + n_2 = n$. 

\subsection{Proof of \Cref{thmMarginalCoverage} (marginal coverage guarantee)}
\label{app:marginal_coverage_proof2}

The proof of \Cref{thmMarginalCoverage} consists in extending the reasoning for the traditional quantile lemma (\eg Lemma 2 in \citet{romano2019conformalized}) to a multivariate setting.

\begin{proof}[Proof of \Cref{thmMarginalCoverage}] For $k \in \{1, \dots, n_2\}$, let $S_{(k, n_2)}$ be the $k$-th smallest score among $\{S_{n_1+i} \}_{i=1}^{n_2}$, where $S_{n_1+i} = s(X_{n_1+i},Y_{n_1+i})$ and the ordering is induced by the transport map $\RR_{n_1}$ computed on $\mathcal{D}_1$. Hence,
\begin{equation}
    \| \RR_{n_1}(S_{(1,n_2)}) \| \leq \| \RR_{n_1}(S_{(2,n_2)}) \| \leq \dots \leq \| \RR_{n_1}(S_{(n_2,n_2)}) \| \,,
\end{equation}
which, using the definition of $\leq_{\RR_{n_1}}$, is equivalently written as
\begin{equation}\label{deforderstats}
    S_{(1,n_2)} \leq_{\RR_{n_1}} S_{(2,n_2)} \leq_{\RR_{n_1}} \dots \leq_{\RR_{n_1}} S_{(n_2,n_2)} \,.
\end{equation}
By construction, $\rho_{\lceil\alpha(n_2+1)\rceil} = \Vert \RR_{n_1}(S_{(\lceil \alpha(n_2+1) \rceil,n_2)}) \Vert $, where $\alpha \in (0,1)$ and $\lceil\alpha(n_2+1)\rceil \leq n_2$, ensuring that the order statistic $\rho_{\lceil\alpha(n_2+1)\rceil}$ is well defined. The quantile region $\wQn(\alpha)$ can be characterized as
\begin{equation}\label{rewriteCalpha}
         S_{\rm test} \in \wQn(\alpha)
    \hspace{1cm} \Longleftrightarrow \hspace{1cm}
    S_{\rm test} \leq_{\RR_{n_1}} S_{(\lceil \alpha(n_2+1)\rceil,n_2)}.   
\end{equation}
Denote by $S_{(k,n_2+1)}$ the $k$-th smallest value (with respect to $\leq_{\RR_{n_1}}$) within $\{S_{n_1+1},\dots,S_n, S_{\rm test}\}$. Here, we view $S_{\rm test}$ as inserted into the ordered list $\{S_{(k,n_2)}\}_{k=1}^{n_2}$, yielding a new ordered list of size $n_2+1$. Then, $S_{\rm test} = S_{(i_0+1,n_2+1)}$, with 
\begin{align}\label{deforderSn1} 
S_{(1,n_2)} \leq_{\RR_{n_1}} \cdots \leq_{\RR_{n_1}} S_{(i_0,n_2)} \leq_{\RR_{n_1}} S_{\rm test} \leq_{\RR_{n_1}} S_{(i_0+1,n_2)} \leq_{\RR_{n_1}} \cdots \leq_{\RR_{n_1}} S_{(n_2,n_2)} \,. 
\end{align}
As a direct consequence of \eqref{deforderSn1}, we have
\begin{itemize}
    \item for $k\leq i_0$, $S_{(k,n_2+1)} = S_{(k,n_2)}$,
    \item for $k=i_0+1$,  $S_{(i_0+1,n_2+1)} = S_{\rm test} \leq_{\RR_{n_1}} S_{(i_0+1,n_2)}$,
    \item for $k>i_0+1$, $S_{(k,n_2+1)} = S_{(k-1,n_2)}\leq_{\RR_{n_1}} S_{(k,n_2)}$.
\end{itemize}
Therefore, for any $k \in \{1, \dots, n_2\}$, $S_{(k,n_2+1)} \leq_{\RR_{n_1}} S_{(k,n_2)}$. 
Hence, if $S_{\rm test} \leq_{\RR_{n_1}} S_{(k,n_2+1)}$, then $S_{\rm test} \leq_{\RR_{n_1}} S_{(k,n_2)}$. We show that the reciprocal also holds. Assume that $S_{\rm test} \leq_{\RR_{n_1}} S_{(k,n_2)}$. By \eqref{deforderSn1}, this implies $k\geq i_0+1$, and in this case, $S_{(k,n_2+1)}$ is the larger of $S_{\rm test}$ and $S_{(k-1,n_2)}$ (with respect to $\leq_{\RR_{n_1}}$). 
Putting everything together, we showed that 
\begin{equation*}
     S_{\rm test} \leq_{\RR_{n_1}} S_{(k,n_2)} 
\hspace{1cm} \Longleftrightarrow \hspace{1cm}
S_{\rm test} \leq_{\RR_{n_1}} S_{(k,n_2+1)}.   
\end{equation*}
Thus,
$
\mathbb{P}\big( S_{\rm test} \leq_{\RR_{n_1}} S_{(k,n_2)} \big)
=
\mathbb{P}\big( S_{\rm test} \leq_{\RR_{n_1}} S_{(k,n_2+1)} \big).
$
Hence, for any $\alpha\in[0,1]$, 
\begin{equation}\label{rewriteCalpha2}
\mathbb{P}\big( S_{\rm test} \in \wQn(\alpha) \big)
= 
\mathbb{P}\big( S_{\rm test} \leq_{\RR_{n_1}} S_{(\lceil \alpha (n_2+1) \rceil,n_2+1)} \big).
\end{equation}
Recall that $n_{\rm ties}$ denotes the maximum number of ties in $\{\RR_{n_1}(S_{n_1+1}),\dots,\RR_{n_1}(S_{n_1+n_2}),\RR_{n_1}(S_{\rm test})\}$.
By definition of $S_{(k,n_2+1)}$, the proportion of elements from $\{S_1,\cdots,S_n,S_{\rm test}\}$ that are less than or equal to $S_{(k,n_2+1)}$ (with respect to $\leq_{\RR_{n_1}}$) lies between $k/(n_2+1)$ and $(k-1+n_{\rm ties})/(n_2+1)$, due to the possible presence of ties.

Since $\{(X_i, Y_i)\}_{i=1}^n$ is assumed to be exchangeable, and $\RR_{n_1}$ is computed using $\mathcal{D}_1$, then $\{ \Vert \RR_{n_1}(S_{n_1+1})\Vert, \cdots, \Vert \RR_{n_1}(S_{n_1+n_2})\Vert, \Vert \RR_{n_1}(S_{\rm test}) \Vert \}$ is exchangeable \citep[see][Proposition 3]{kuchibhotla2020exchangeability}. Therefore, by \eqref{rewriteCalpha2},
\begin{equation*}
\frac{\lceil \alpha (n_2+1) \rceil}{n_2+1}
\leq
\mathbb{P}\big( S_{\rm test} \in  \wQn(\alpha) \big)
\leq \frac{\lceil \alpha (n_2+1) \rceil-1+n_{\rm ties}}{n_2+1} \,,
\end{equation*}
and we can conclude that
\begin{equation*}
\alpha
\leq
\mathbb{P}\big( S_{\rm test} \in  \wQn(\alpha) \big)
\leq
\alpha + \frac{n_{\rm ties}}{n_2+1} \,.
\end{equation*}

\end{proof}

\subsection{Alternative proof of \Cref{thmMarginalCoverage}} \label{app:marginal_coverage_proof1}

We provide an alternative proof of \Cref{thmMarginalCoverage}, which is similar in essence to the previous one but based on another perspective. To this end, we recall a variant of the traditional quantile lemma adapted to our needs (\ie when there are ties in the ranks) and detail its proof for completeness.

\begin{lemma} \citep[Quantile lemma,][]{lei2018distribution}
\label{lemma:exchangeability}
    Suppose $(U_1, \dots, U_{n+1})$ is an exchangeable sequence of random variables in $\R$. Then, for any $\beta \in (0,1)$,
    \begin{equation}
        \mathbb{P}(U_{n+1} \leq U_{(\lceil \beta(n+1) \rceil)}) \geq \beta
    \end{equation}
    Additionally, assume that the maximum number of ties (\ie identical values) in $(U_1,\dots,U_{n+1})$ is $n_{\rm ties}$. Then,
    \begin{equation}
        \mathbb{P}(U_{n+1} \leq U_{(\lceil \beta(n+1) \rceil)}) \leq \beta + \frac{n_{\rm ties}}{n+1}
    \end{equation}
    The probabilities are taken over the joint distribution of $(U_1, \dots, U_{n+1})$.
\end{lemma}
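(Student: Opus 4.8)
The plan is to prove the two inequalities of \Cref{lemma:exchangeability} separately, relying throughout only on exchangeability and a careful accounting of ranks (allowing for the single possible tie).

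\textbf{Lower bound.} First I would introduce the (randomized or tie-broken) rank $R = \#\{ i \in \{1,\dots,n+1\} : U_i \leq U_{n+1}\}$ of the last variable among all $n+1$ variables, so that $U_{n+1} \leq U_{(r)}$ holds whenever $R \leq r$ (using the convention that ties are all counted, $R$ is at least the position of $U_{n+1}$ in the sorted list). By exchangeability of $(U_1,\dots,U_{n+1})$, the rank statistic is stochastically no larger than a uniform rank: more precisely, $\mathbb{P}(R \leq r) \geq r/(n+1)$ for every integer $r$, because among the $n+1$ cyclic relabelings (or among all $(n+1)!$ permutations) the event $\{$the last coordinate is among the $r$ smallest$\}$ has probability at least $r/(n+1)$ even in the presence of ties. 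Taking $r = \lceil \beta(n+1)\rceil$ gives $\mathbb{P}(U_{n+1} \leq U_{(\lceil \beta(n+1)\rceil)}) \geq \mathbb{P}(R \leq r) \geq r/(n+1) \geq \beta$, which is the first claim; note this half needs no distinctness hypothesis.

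\textbf{Upper bound.} For the reverse inequality I would use the extra structure: all of $U_1,\dots,U_{n+1}$ are a.s.\ distinct except for exactly one coincident pair indexed by $i$ and $k$. Condition on the unordered multiset of values; then exchangeability implies the assignment of labels to these $n+1$ values is uniform over all consistent permutations. Let $V_{(1)} < \cdots < V_{(n)}$ be the $n$ \emph{distinct} values (the tied value appearing once). The event $\{U_{n+1} \leq U_{(\lceil \beta(n+1)\rceil)}\}$ translates, after sorting, into $U_{n+1}$ occupying one of a certain prefix of order-statistic slots; because of the single tie, the threshold value $U_{(\lceil \beta(n+1)\rceil)}$ can ``jump'' by at most one distinct level compared with the no-tie case, which is exactly what produces the $2/(n+1)$ slack. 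Concretely, I would bound $\mathbb{P}(U_{n+1} \text{ is among the } m \text{ smallest, counting ties generously}) \leq (m+1)/(n+1)$ where $m = \lceil \beta(n+1)\rceil$, using that at most two of the $n+1$ labels can land on the tied value and each label is equally likely to be the $(n+1)$-th. Then $\mathbb{P}(U_{n+1} \leq U_{(m)}) \leq (m+1)/(n+1) \leq \beta + 1/(n+1) + 1/(n+1) = \beta + 2/(n+1)$, where one $1/(n+1)$ comes from $\lceil \beta(n+1)\rceil \leq \beta(n+1)+1$ and the other from the tie.

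The main obstacle I anticipate is making the tie-bookkeeping in the upper bound fully rigorous: precisely defining ``$U_{n+1} \leq U_{(\lceil\beta(n+1)\rceil)}$'' when the order statistics are not all distinct, and then verifying that the single admissible coincidence inflates the coverage probability by no more than $1/(n+1)$ beyond the usual discretization term. I would handle this by working with the explicit sorted list of the $n+1$ realized values, identifying which slot index $U_{n+1}$ falls into under the uniform-label model, and checking the worst case is when $U_{n+1}$ itself is one of the two tied copies. The lower bound, by contrast, is the standard exchangeability argument and should go through with no complications.
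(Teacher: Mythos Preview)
Your approach is essentially the paper's: both exploit exchangeability to reduce to counting how many of the $n+1$ values satisfy $U_i \leq U_{(m)}$ with $m = \lceil\beta(n+1)\rceil$. The paper does this cleanly via the averaging identity
\[
\mathbb{P}\bigl(U_{n+1} \leq U_{(m)}\bigr) \;=\; \frac{1}{n+1}\,\mathbb{E}\Bigl[\sum_{i=1}^{n+1} \indic{U_i \leq U_{(m)}}\Bigr],
\]
then observes that the sum is always $\geq m$ (lower bound) and at most $m+1$ under the single-tie hypothesis (upper bound, via a short case split on whether the tied value lies below, at, or above $U_{(m)}$). Your conditioning-on-the-multiset argument for the upper bound is this averaging in different clothing, and the ``tie-bookkeeping'' you anticipate is exactly that case split.

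There is, however, a genuine slip in your lower-bound formalization. The quantity $R = \#\{i : U_i \leq U_{n+1}\}$ you write down is the \emph{upper} rank, and for this definition the claim $\mathbb{P}(R \leq r) \geq r/(n+1)$ is false in the presence of ties: with values $1,2,2,3,4$ and a uniformly random label, the upper ranks are $\{1,3,3,4,5\}$, so $\mathbb{P}(R \leq 2) = 1/5 < 2/5$. The inclusion $\{R \leq r\} \subseteq \{U_{n+1} \leq U_{(r)}\}$ is fine, but the lower bound on $\mathbb{P}(R \leq r)$ is not, so the chain breaks. Your parenthetical ``(randomized or tie-broken)'' suggests you sense the issue; if $R$ is instead defined as a uniformly tie-broken rank it is genuinely uniform on $\{1,\ldots,n+1\}$ and the argument goes through. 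Simpler still is to bypass $R$ entirely and use the averaging identity above, as the paper does: it delivers both inequalities at once with no ambiguity about how ranks handle ties.
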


\begin{proof}
    By exchangeability of $(U_1, \dots, U_{n+1})$, for any $i \in \{1, \dots, n+1\}$,
    \begin{equation}
        \mathbb{P}(U_{n+1} \leq U_{(\lceil \beta (n+1) \rceil)}) = \mathbb{P}(U_i \leq U_{(\lceil \beta (n+1) \rceil)}) \,.
    \end{equation}
    Therefore,
    \begin{align}
        \mathbb{P}(U_{n+1} \leq U_{(\lceil \beta (n+1) \rceil)}) 
        &= \frac1{n+1} \sum_{i=1}^{n+1} \mathbb{P}(U_i \leq U_{(\lceil \beta (n+1) \rceil)}) \\
        &= \frac1{n+1} \mathbb{E}\left[\sum_{i=1}^{n+1} \indic{U_i \leq U_{(\lceil \beta (n+1) \rceil)}} \right] \label{proof:quantile_lemma_1} \\
        &= \frac1{n+1} \mathbb{E}\left[\sum_{i=1}^{n+1} \indic{U_i < U_{(\lceil \beta (n+1) \rceil)}} + \indic{U_i = U_{(\lceil \beta (n+1) \rceil)}}\right] \label{proof:quantile_lemma_2}
    \end{align}
    Since $U_{(\lceil \beta (n+1) \rceil)}$ is the $\lceil \beta (n+1) \rceil$-th smallest value of $(U_1, \dots, U_{n+1})$, then $\sum_{i=1}^{n+1} \indic{U_i \leq U_{(\lceil \beta (n+1) \rceil)}} \geq \lceil \beta (n+1) \rceil$, and by \eqref{proof:quantile_lemma_1},
    \begin{align*}
        \mathbb{P}(U_{n+1} \leq U_{(\lceil \beta (n+1) \rceil)}) &\geq \frac1{n+1}\mathbb{E}\left[ \lceil \beta (n+1) \rceil \right] \\
        &\geq \frac{\lceil \beta (n+1) \rceil}{n+1}  \geq \beta \,.
    \end{align*}
    Additionally, under the assumption that no value appears more than $n_{\rm ties}$ times among the $n+1$ exchangeable variables, and based on \eqref{proof:quantile_lemma_2}, $\sum_{i=1}^{n+1} \indic{U_i \leq U_{(\lceil \beta (n+1) \rceil)}} \leq \lceil \beta(n+1) \rceil - 1+ n_{\rm ties}$. We can conclude that,
    \begin{equation*}
        \mathbb{P}(U_{n+1} \leq U_{(\lceil \beta (n+1) \rceil)}) \leq \frac{\lceil \beta(n+1) \rceil-1 + n_{\rm ties}}{n+1} \leq \beta + \frac{n_{\rm ties}}{n+1} \,.
    \end{equation*}
\end{proof}

By using \Cref{lemma:exchangeability} along with the properties of Monge-Kantorovich rank maps on our split calibration set, we can prove \Cref{thmMarginalCoverage} as follows.

\begin{proof}[Alternative proof of \Cref{thmMarginalCoverage}]
    By construction of the prediction region, we have
    \begin{align*}
        \Big\{ Y_{\rm test} \in \Ca(X_{\rm test}) \Big\} = \Big\{ S_{\rm test} \in \wQn(\alpha) \Big\} 
        = \Big\{ \| \RR_{n_1}(S_{\rm test}) \| \leq \rho_{\lceil(n_2+1)\alpha\rceil}  \Big\}\,.
    \end{align*}
    Therefore, 
    \begin{equation}
        \mathbb{P}(Y_{\rm test} \in \Ca(X_{\rm test})) = \mathbb{P}\left(\| \RR_{n_1}(S_{\rm test}) \| \leq \rho_{\lceil(n_2+1)\alpha\rceil}\right) \,. \label{proof:coverage_}
    \end{equation}

    For $k \in \{1, \dots, n_2\}$, let $S_{(k, n_2)}$ be the $k$-th smallest score among $\{S_{n_1+i} \}_{i=1}^{n_2}$, where $S_{n_1+i} = s(X_{n_1+i},Y_{n_1+i})$ with respect to the ordering $\leq_{\RR_{n_1}}$~\eqref{KantoDiscreteOT}. Hence,
    \begin{equation}\label{MKorderstatistics}
        \| \RR_{n_1}(S_{(1,n_2)}) \| \leq \| \RR_{n_1}(S_{(2,n_2)}) \| \leq \dots \leq \| \RR_{n_1}(S_{(n_2,n_2)}) \| \,.
    \end{equation}
    Additionally, denote by $S_{(k,n_2+1)}$ the $k$-th smallest value (with respect to $\leq_{\RR_{n_1}}$) within $\{S_{n_1+1},\dots,S_n, S_{\rm test}\}$. We know that $\| \RR_{n_1}(S_{\rm test}) \| \leq \| \RR_{n_1}(S_{(k,n_2)}) \|$ if, and only if, $\| \RR_{n_1}(S_{\rm test}) \| \leq \| \RR_{n_1}(S_{(k,n_2+1)}) \|$ (\eg see the proof of Lemma 2 in \citet{romano2019conformalized}). 
    
    For $\alpha \in (0,1)$ such that $\lceil \alpha(n_2+1)\rceil\leq n_2$, recall that $\rho_{\lceil\alpha(n_2+1)\rceil}$ is the $\lceil \alpha(n_2+1) \rceil$-th smallest value in $\{\|\RR_{n_1}(S_{n_1+i})\|\}_{i=1}^{n_2}$, \ie $\rho_{\lceil\alpha(n_2+1)\rceil} = \| \RR_{n_1}(S_{(\lceil \alpha(n_2+1) \rceil,n_2)}) \|$.
    Therefore, $\| \RR_{n_1}(S_{\rm test}) \| \leq \rho_{\lceil\alpha(n_2+1)\rceil}$ if and only if $\| \RR_{n_1}(S_{\rm test}) \| \leq \| \RR_{n_1}(S_{(\lceil \alpha(n_2+1) \rceil, n_2+1)}) \|$. 
    By \eqref{proof:coverage_}, we deduce that
    \begin{equation}\label{proof:coverage_bis}
        \mathbb{P}(Y_{\rm test} \in \Ca(X_{\rm test})) = \mathbb{P}(\| \RR_{n_1}(S_{\rm test}) \| \leq \| \RR_{n_1}(S_{(\lceil \alpha(n_2+1) \rceil, n_2+1)}) \|) \,.
    \end{equation}

    As mentioned in \Cref{app:marginal_coverage_proof2}, $(\| \RR_{n_1}(S_{n_1+i}) \|)_{i=1}^{n_2} \cup \{ \| \RR_{n_1}(S_{\rm test}) \| \}$ is exchangeable since $\{(X_i, Y_i)\}_{i=1}^n$ is assumed to be exchangeable and $\RR_{n_1}$ is computed on $\mathcal{D}_1$ and then applied on scores computed on a separated set, namely $\mathcal{D}_2 \cup \{(X_{\rm test}, Y_{\rm test})\}$ \citep[see][Proposition 3]{kuchibhotla2020exchangeability}. We can thus conclude by applying the quantile lemma to $(\| \RR_{n_1}(S_{n_1+i}) \|)_{i=1}^{n_2} \cup \{ \| \RR_{n_1}(S_{\rm test}) \| \}$, which is stated and proved in \Cref{lemma:exchangeability} for completeness. 

\end{proof}

\subsection{Proof of Theorem \ref{thmConditionalCoverage} (asymptotic conditional coverage)}
\label{app:condit_coverage_proof}

As in \Cref{secOTCPplus}, we denote by $\RR_k(\cdot | x)$ the conditional empirical MK rank map based on the $k$-nearest neighbors in $\mathcal{D}_1$ of $x$  and by $\widehat{\mathcal{Q}}_k(\alpha \vert x)$ the conditional MK quantile region of level $\alpha \in[0,1]$ \citep{del2024nonparametric}. By assumption, $k$ is a function of $n_1$ satisfying $k \to +\infty$ and $k/n_1 \to 0$ as $n_1 \to +\infty$. For clarity, we omit the explicit dependence of $k$ on $n_1$ in our notation. 

By definition of our prediction regions \eqref{Calpha_reg+}, the desired result \eqref{asymptoticConditionalCoverage} can also be written as, 
$$
\mathbb{P} \Big( \Vert \RR_k(S_{\rm test} \vert X_{\rm test} )\Vert \leq \rho_{\lceil(n_2+1)\alpha\rceil} \Big\vert X_{\rm test}\Big)  
\overset{\mathbb{P}}{\longrightarrow} \alpha\, 
\quad \mbox{as } n_1,n_2 \rightarrow +\infty.
$$
By applying Corollary 3.4 from \citet{del2024nonparametric}, we know that
\begin{equation}\label{FirstLimit0}
\forall \alpha \in [0,1], \quad 
\mathbb{P} \Big( \Vert \RR_k(S_{\rm test} \vert X_{\rm test} )\Vert \leq \alpha \Big\vert X_{\rm test}\Big)  
\overset{\mathbb{P}}{\longrightarrow} \alpha\, 
\quad \mbox{as }
k,n_1 \rightarrow +\infty.
\end{equation}
Therefore, the main technical challenge of our proof is to understand how the asymptotic convergence guarantee in \eqref{FirstLimit0} interacts with our coverage level $\rho_{\lceil(n_2+1)\alpha\rceil}$. To address this, we will rely on the following lemma.

\begin{lemma}\label{CvgceUnivQuantileTreshold}
    For all $\alpha \in [0,1]$ and $n=n_1+n_2$, $\lim_{n_1, n_2 \to +\infty} \rho_{\lceil(n_2+1)\alpha\rceil} = \alpha$.
\end{lemma}

\begin{proof}
Let $\alpha \in [0,1]$ and for $k \in \mathbb{N}^*$, let $f_k(X_{\rm test})= \mathbb{P} \big( \Vert \RR_k(S_{\rm test} \vert X_{\rm test} )\Vert \leq \alpha \big\vert X_{\rm test}\big)$. 
The sequence $(f_k(X_{\rm test}))_{k \in \mathbb{N}^*}$ converges to $\alpha$ in probability by \eqref{FirstLimit0}, and is uniformly integrable since $|f_k(X_{\rm test})| \leq 1$. Therefore, by the Lebesgue-Vitali theorem \citep[Theorem 4.5.4]{bogachev2007measure}, $\lim_{k\rightarrow +\infty} \mathbb{E}[f_k(X_{\rm test})] = \alpha$, which can be written as
\begin{equation}\label{FirstLimitb}
    \lim_{n_1 \to +\infty} \mathbb{P} \big( \| \RR_k \big(S_{\rm test} | X_{\rm test} \big) \| \leq \alpha  \big) = \alpha\,. 
\end{equation}
where we have used $k \to +\infty$ as $n_1 \to +\infty$. Hence, the sequence of random variables $(\| \RR_k \big(S_{\rm test} | X_{\rm test}) \|)_{k \in \mathbb{N}^*}$ converges in distribution to a random variable drawn from the uniform distribution in $[0,1]$, $\mbox{Unif}([0,1])$.

In addition, $(\Vert \RR_k(S_{n_1+i} | X_{n_1+i})\Vert)_{i=1}^{n_2}$ is a sequence of $i.i.d.$ random variables, drawn from the same distribution as $\Vert \RR_k(S_{\rm test} | X_{\rm test})\Vert$. 
This implies that the quantile function of $(\| \RR_k(S_{n_1+i} | X_{n_1+i}) \|)_{i=1}^{n_2}$ (denoted by $\hat{q}_n$) converges pointwise to the quantile function of $\Vert \RR_k(S_{\rm test} | X_{\rm test})\Vert$ when $n_2\rightarrow +\infty$ and $n_1$ is held fixed (this follows from the Glivenko-Cantelli theorem). Since we showed that $(\Vert \RR_k(S_{\rm test} | X_{\rm test})\Vert)_{k \in \mathbb{N}^*}$ converges in distribution to a random variable distributed from $\text{Unif}([0,1])$, we conclude that $\hat{q}_n$ converges pointwise to the quantile function of $\text{Unif}([0,1])$ (denoted by $q$, which is continuous) as $n_1, n_2 \to +\infty$. 

Moreover, in our setting where all distributions are bounded, $\hat{q}_n$ converges uniformly to $q$, see \eg \citet{bogoya2016convergence}. In particular, since $\rho_{\lceil(n_2+1)\alpha\rceil}= \hat q_n\big( \lceil(n_2+1)\alpha\rceil/n_2 \big)$, $\lim_{n_2 \to \infty} \lceil(n_2+1)\alpha\rceil/n_2 = \alpha$ and $q$ is continuous, one can show that $\lim_{n_1, n_2 \to +\infty} \rho_{\lceil(n_2+1)\alpha\rceil} = q(\alpha) = \alpha$.

\end{proof}

\begin{proof}[Proof of \Cref{thmConditionalCoverage}] 
Let $\RR(\cdot \vert X_{\rm test})$ be the OT rank map associated to the conditional distribution of $S$ given $X_{\rm test}$. 
For a continuous distribution, the reference distribution of such OT rank map is associated to the random vector $R\Phi$ for two independent random variables $R$ and $\Phi$ drawn respectively from a uniform distribution on [0, 1]
and on the unit sphere. 
Then,
\begin{equation} \label{proof:obj_decomposed}
    \mathbb{P} \big( \Vert \RR_k(S_{\rm test} \vert X_{\rm test} )\Vert \leq \rho_{\lceil(n_2+1)\alpha\rceil} \big\vert X_{\rm test}\big) - \alpha = \Delta_n + \Gamma_n\,,
\end{equation}
where we have defined,
\begin{align*}
    \Delta_n &= \mathbb{P} \big( \Vert \RR_k(S_{\rm test} \vert X_{\rm test} )\Vert \leq \rho_{\lceil(n_2+1)\alpha\rceil} \big\vert X_{\rm test}\big)  
    - 
    \mathbb{P} \big( \Vert \RR(S_{\rm test} \vert X_{\rm test} )\Vert \leq \rho_{\lceil(n_2+1)\alpha\rceil} \big\vert X_{\rm test}\big)\,, \\
\Gamma_n &= \mathbb{P} \big( \Vert \RR(S_{\rm test} \vert X_{\rm test} )\Vert \leq \rho_{\lceil(n_2+1)\alpha\rceil} \big\vert X_{\rm test}\big) - \alpha \,. 
\end{align*}
Both terms can be expressed in terms of cumulative distribution functions, \ie 
\begin{align*}
    \Delta_n &= F_k\big(\rho_{\lceil(n_2+1)\alpha\rceil} \vert X_{\rm test} \big) 
- F\big(\rho_{\lceil(n_2+1)\alpha\rceil} \vert X_{\rm test} \big)\,, \\
\Gamma_n &= F\big(\rho_{\lceil(n_2+1)\alpha\rceil} \vert X_{\rm test} \big) 
- \alpha\,,
\end{align*}
where $F_k (\cdot \vert X_{\rm test}),F (\cdot \vert X_{\rm test})$ denote the c.d.f of $\Vert \RR_k(S_{\rm test} \vert X_{\rm test} )\Vert$ given $X_{\rm test}$ and $\Vert \RR(S_{\rm test} \vert X_{\rm test} )\Vert$ given $X_{\rm test}$ respectively. On the one hand, by the definition of conditional MK quantiles \citep{del2024nonparametric}, $\Vert \RR(S_{\rm test} \vert X_{\rm test} )\Vert$ given $X_{\rm test}$ is distributed from $\text{Unif}([0,1])$. Since $\lim_{n_1,n_2 \to +\infty} \rho_{\lceil(n_2+1)\alpha\rceil} = \alpha$ by \Cref{CvgceUnivQuantileTreshold}, and the c.d.f of $\text{Unif}([0,1])$ is continuous, then $\lim_{n \to +\infty} \Gamma_n = 0$. 
On the other hand, by eq.~\eqref{FirstLimit0}, $F_k(\tau \vert X_{\rm test}) \overset{\mathbb{P}}{\longrightarrow} F(\tau \vert X_{\rm test})$ for every $\tau\in[0,1]$, which implies the uniform convergence of $F_k(\cdot \vert X_{\rm test})$ to $F(\cdot \vert X_{\rm test})$ by continuity of $F$, see \eg \citet{eisenberg1983uniform}. 
Therefore, $\vert \Delta_n \vert \leq \sup_{\tau} | F_k(\tau \vert X_{\rm test}) - F(\tau \vert X_{\rm test}) | \overset{\mathbb{P}}{\longrightarrow} 0 $ as $n \rightarrow +\infty$. By \eqref{proof:obj_decomposed}, $\mathbb{P} \big( \Vert \RR_k(S_{\rm test} \vert X_{\rm test} )\Vert \leq \rho_{\lceil(n_2+1)\alpha\rceil} \big\vert X_{\rm test}\big) \overset{\mathbb{P}}{\longrightarrow} \alpha$ as $n_1, n_2 \rightarrow +\infty$, which concludes the proof. 

\end{proof}

\section{Related works in multi-output regression}\label{RelatedWorksRegression}

Multi-output conformal regression gathers increasing interest, as exemplified by several works \citep{braun2025minimum,luo2025volume,dheur2025multi,klein2025multivariate} released concomitantly to the present paper. In particular, \citet{klein2025multivariate} also motivate the use of transport-based quantiles to deal with multivariate scores. In this section, we provide an overview of possible methods for multivariate prediction regions and we refer the interested reader to 
\citet{dheur2025multi} for a more complete comparison.

Methods based on component-wise univariate quantiles \cite{neeven2018conformal} or ellipsoids \citep{messoudi2020conformal,johnstone2021conformal,henderson2024adaptive} were proposed as real-valued scores amenable to conformal multi-output regression. 
However, these methods
could be phrased as multivariate center-outward quantiles in a generalized CP framework, hence the comparison in \cref{sec:multireg}. 
Indeed, hyperrectangle regions based on componentwise quantiles and ranks have a long history when dealing with quantiles of multivariate data 
\citep{puri1971nonparametric}, although it implicitly assumes independent components. 
Under elliptic assumptions, Mahalanobis ranks are distribution-free and the associated quantiles provide an appropriate center-outward ordering \citep{hallin2002optimal}. 
As a comparison, transport-based quantiles do not make any assumption on the shape of the underlying distributions. We refer to  \citet{HallinAOS_2021} for a recent review of existing notions of multivariate quantiles, with an emphasis on the distribution-freeness of ranks. 

Hereafter, we point out other existing works to design real-valued scores while adapting to the underlying data structure. 
To ease the readibility, we follow the terminology proposed by \citet{braun2025minimum}.

\textit{Copula-based approaches.}
\citet{messoudi2021copula} introduced copulas in CP to model dependence in multivariate distributions.
Copulas were also studied by \citet{sun2024copula} for time series datasets, and a main challenge addressed by \citet{park2024semiparametric} is their accurate estimation. 

\textit{Multiple testing approaches.}
With a different perspective, \citet{timans2025maxrank} propose a correction for CP phrased as multiple permutation testing. 
The reader might be interested in discussions and references therein for comparison with the classical Bonferroni correction. 
Additionally, one can note that multiple permutation tests were also proposed based on transport-based quantiles in \citet{hlavka2025multivariate}, although not dealing with CP. 

\textit{Latent-space approaches.}
Recently, \citet{feldman2023calibrated} combined multivariate quantiles and CP to obtain flexible prediction sets. 
Their method is based on a conditional variational auto-encoder for quantile regression, and they propose a calibration step to conformalize any quantile regression algorithms. In doing so, multivariate quantiles are used to accurately model the distribution of $Y\vert X$. 
By the way, note that OT-CP+ targets instead the conditional distribution of multivariate scores $s(X,Y)\vert X$, in order to cope with an arbitrary underlying algorithm $\hat{f}(x)$. 
We also mention a very recent work \cite{luo2025volume} that transforms the distribution $Y\vert X$ to a Gaussian distribution based on Conditional Normalizing Flows, and combines this with volume minimization. 

\textit{Density-based and sampling-based approaches.}
Other flexible approaches use density estimation \cite{izbicki2022CD}, generative models \cite{pmlr-v206-wang23n} or both \cite{plassier2024probabilistic} within the real-valued score function. 
We also mention the work of \citet{sesia2021conformal} that incorporates conditional histograms. 
These approaches can provide discontinuous prediction regions, which is appropriate for multi-modal datasets. 

\textit{Optimization over template shapes.}
Regions with convex shapes can be fitted to several clusters by volume minimization
\cite{tumu2024multi}, to induce multi-modal prediction regions. 
A similar optimization procedure has been recently proposed by
\citet{braun2025minimum} to design the minimum-volume set that contains a proportion $\alpha$ of the data. 
Therein, the building blocks are arbitrary norm-based sets, with potentially multiple norms yielding non-convex prediction regions.

\section{Experimental Details} \label{appendix:expedetails}

\subsection{Optimal transport solver}

In all of our experiments, optimal transport problems are solved using the network simplex method implemented in the Python Optimal Transport library \citep{flamary2021pot}. This solver is written in C++/Cython, making it generally faster than pure Python implementations. 

\subsection{Implementation details for regression}\label{detailsRegression}

In Figure \ref{FigIllustRegression}, the score $s(X,Y)=Y-\hat{f}(X)$ can be seen as a random vector $\zeta$ distributed as $\sum_{\ell=1}^3 \pi_\ell \mathcal{N}(m_\ell,\Sigma_\ell)$, where
$\pi_1=\pi_2=\frac{3}{8}$, $\pi_3=\frac{1}{4}$, $m_1= \genfrac(){0pt}{1}{5}{0}$, $m_2=-m_1$, $m_3=\genfrac(){0pt}{1}{0}{0}$, 
$\Sigma_1 = \genfrac(){0pt}{1}{4 \quad -3}{-3 \quad 4}$,
$\Sigma_2 = \genfrac(){0pt}{1}{4 \quad 3}{3 \quad 4}$,
$\Sigma_3 = \genfrac(){0pt}{1}{3 \quad 0}{0 \quad 1}$.

For our real data experiments, we used datasets available in \citet{tsoumakas2011mulan}.
\cref{tab:datasets} specifies the number of observations and variables (for the features $X$ and for the output $Y$) for each dataset. 

\begin{table}[h]
\centering
\begin{tabular}{|l|r|r|r|l|}
\hline
\textbf{Name} & \textbf{$\#$Instances} & \textbf{$\#$Features} & \textbf{$\#$Targets} \\
\hline
atp1d & 337 & 411 & 6 \\
\hline
rf1 & 9125 & 64 & 8 \\
\hline
scm20d & 8966 & 61 & 16 \\
\hline
jura & 359 & 15 & 3  \\
\hline
wq & 1060 & 16 & 14  \\
\hline
enb & 768 & 8 & 2 \\
\hline
\end{tabular}
\caption{Details of datasets used for multiple-output regression}
\label{tab:datasets}
\end{table}

In the experiments involving OT-CP+ and a $k$-nearest neighbor search, setting $k=n/10$ for datasets of medium size and $k=\sqrt{n}$ for larger datasets provides a tradeoff between adaptive results and fast computational complexity. 

\subsection{Implementation details for classification}

The implementation of the ARS score relies on codes made available in the original paper \citep{romano2020classification}.

Experiments on MNIST and Fashion-MNIST in \Cref{FigAvgResultsBothClassif} and \Cref{FigCondtoy_ResultsBothClassif} involve a random forest classifier implemented with the Python library scikit-learn. 
We used $25\,000$ data splitted in train/calibration/test with ratio $10\%$/$45\%$/$45\%$, since this is sufficient for the classifier to reach $90\%$ accuracy and to ensure reasonable size for the test data. Metrics are computed and averaged over $N=10$ repeated random draws. 
Additional figures \ref{FigIllustClassif_msnitK10} and \ref{FigIllustClassif2} related to these experiments are provided below, showing detailed label-wise results. 

\begin{figure*}[t]
    \centering
    \subfigure[Coverage]{\includegraphics[width=0.32\textwidth]{./Classif_cond_to_y_Coverage_mnist.pdf}}
    \subfigure[Average size]{\includegraphics[width=0.32\textwidth]{./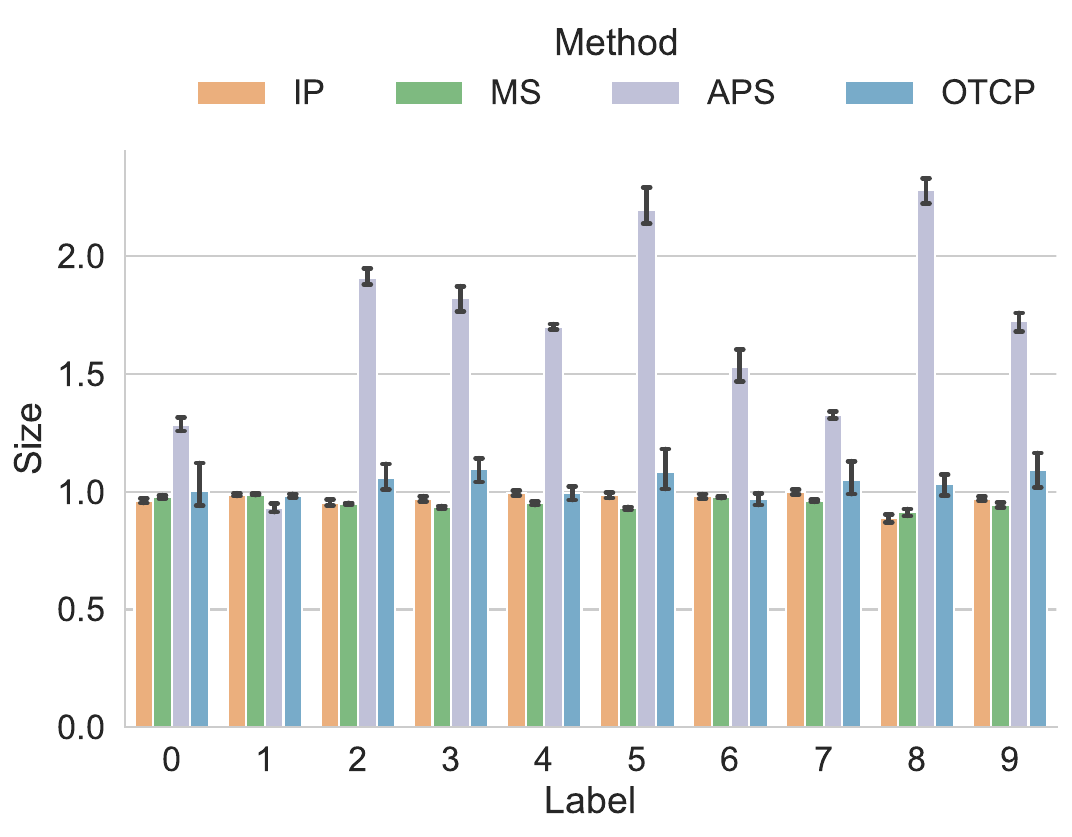}}
    \subfigure[Informativeness]{\includegraphics[width=0.32\textwidth]{./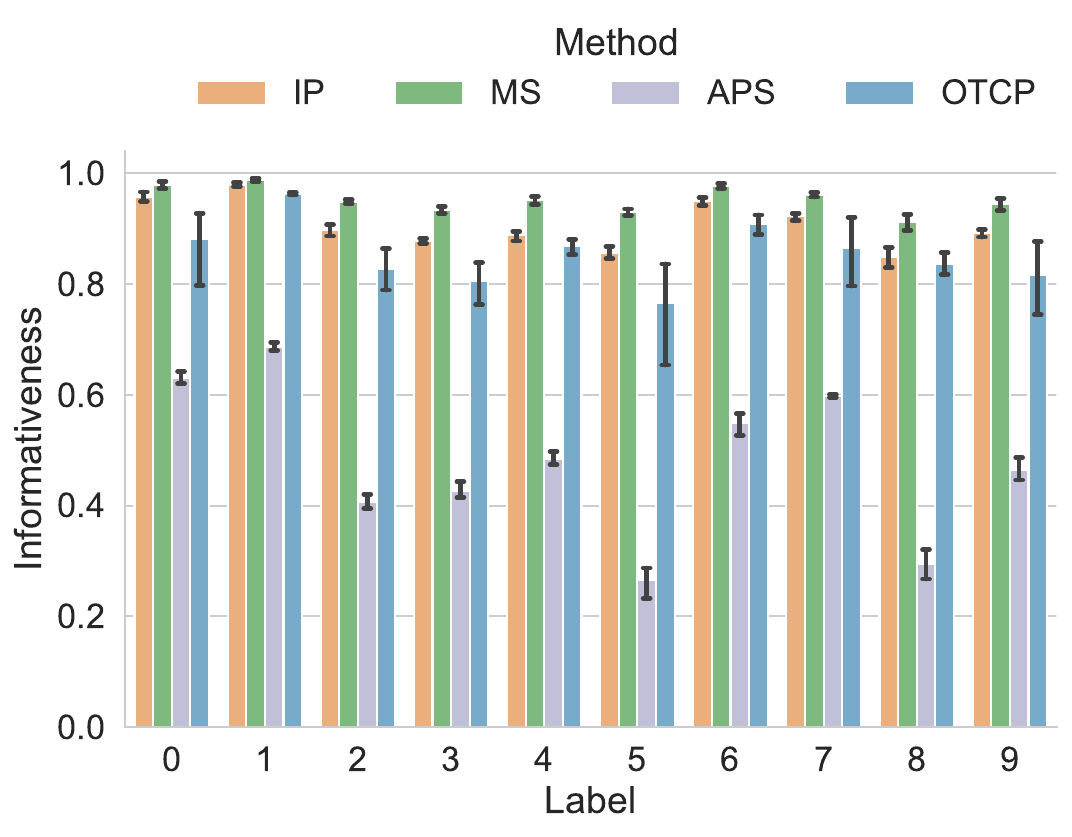}}
    \caption{ Label-wise results on $K=10$ classes of MNIST  }
    \label{FigIllustClassif_msnitK10}
\end{figure*}

\begin{figure*}[t]
    \centering
    \subfigure[Coverage]{\includegraphics[width=0.32\textwidth]{./Classif_cond_to_y_Coverage_fashion.pdf}}
    \subfigure[Average size]{\includegraphics[width=0.32\textwidth]{./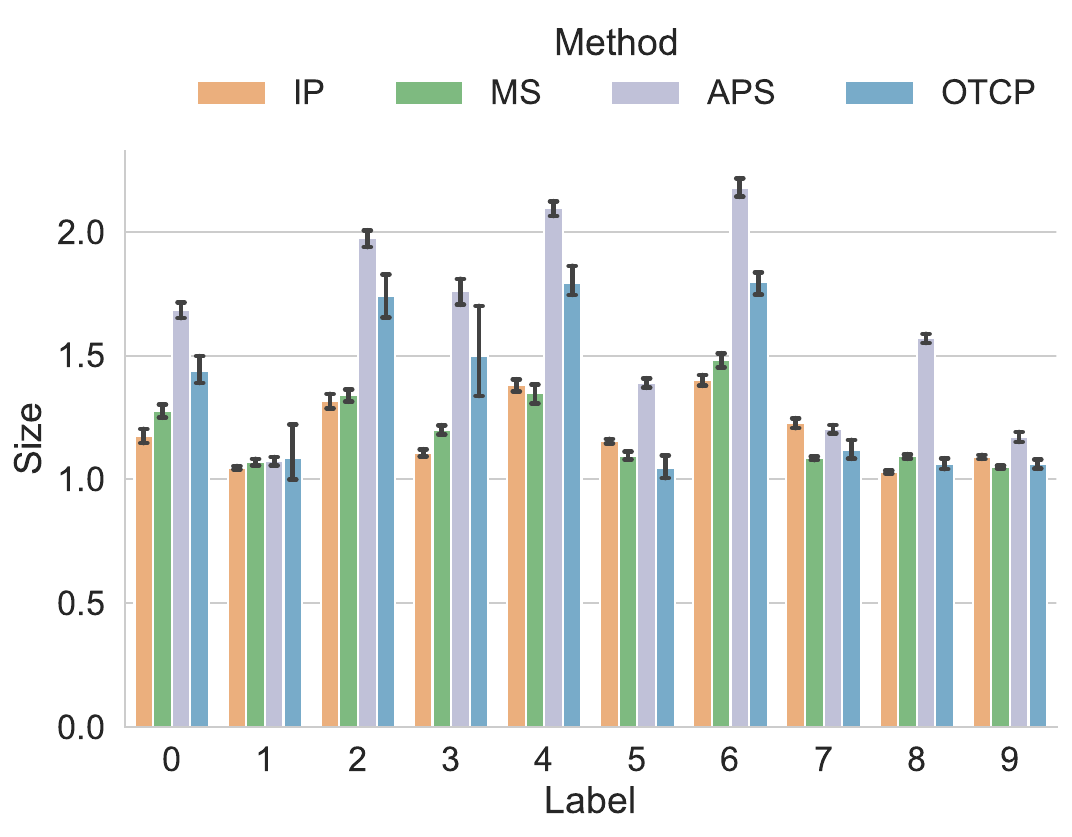}}
    \subfigure[Informativeness]{\includegraphics[width=0.32\textwidth]{./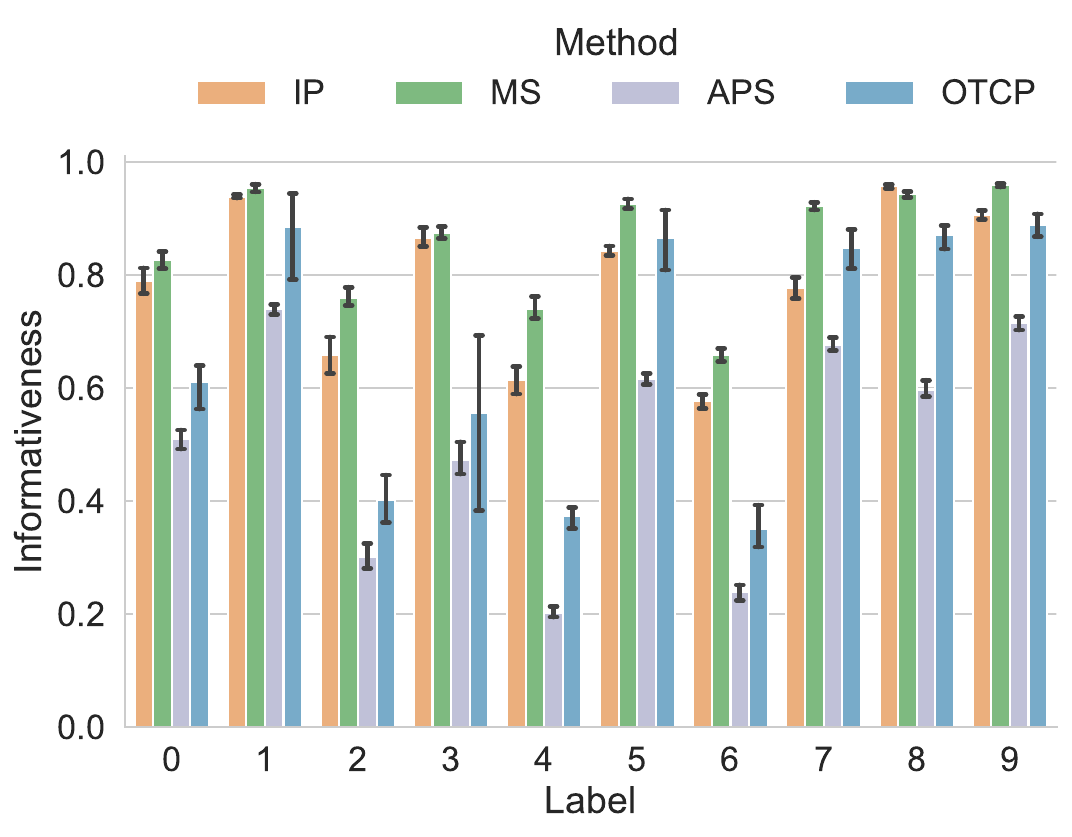}}
    \caption{Label-wise results on $K=10$ classes of Fashion-MNIST}
    \label{FigIllustClassif2}
\end{figure*}

\subsection{Additional experiments}\label{additionalexp}

\paragraph{Computational time.} In \Cref{fig:timeOTCP}, we report the runtimes of OT-CP and OT-CP+ for the empirical settings corresponding to \Cref{FigIllustRegression,FigIllustRegression2}, with a varying number of calibration instances. As expected, OT-CP+ is slower than OT-CP, especially as the size of the calibration set increases.
\begin{figure}
    \centering
    \includegraphics[width=0.5\linewidth]{./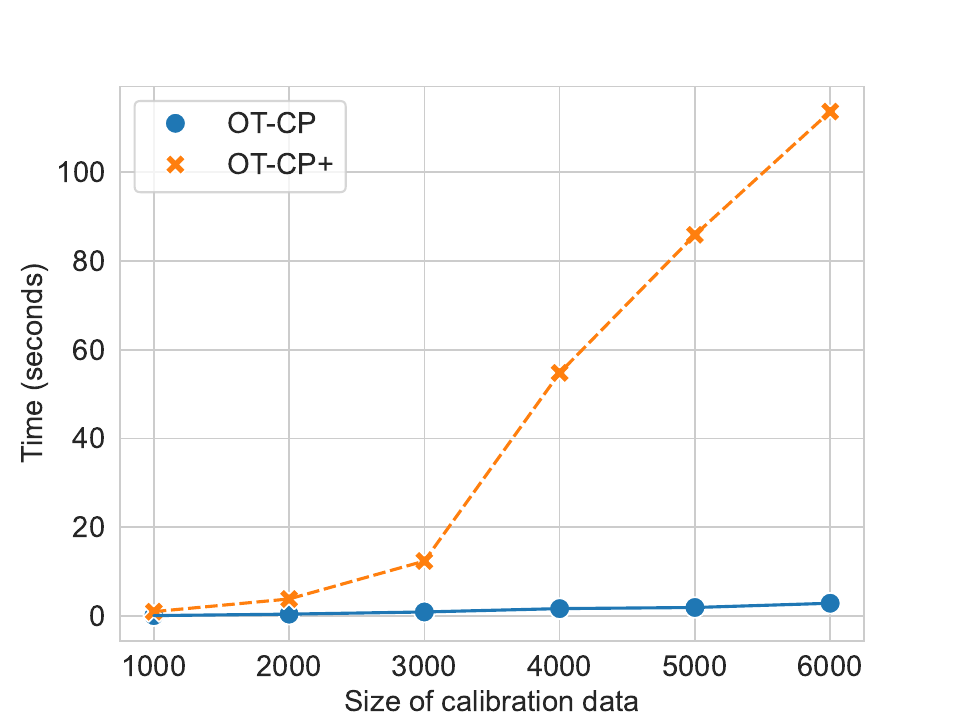}
    \caption{Computational time for OT-CP and OT-CP+ against the number of calibration instances}
    \label{fig:timeOTCP}
\end{figure}

\paragraph{Experiments on real datasets.} \Cref{FigAddXPRegressionRealData} complements experiments from \cref{RealDataComparison} with marginal coverage, volume and computational time. One can verify the appropriate marginal coverage of both methods, which legitimates the benefits of OT-CP+ in terms of adaptivity in \cref{RealDataComparison}. 
One can observe that, for datasets with the lowest amount of samples, `atp1d' and `jura', OT-CP may tend to slightly overcover. 
\cref{FigAddXPRegressionRealData} also illustrates that OT-CP+ is more computationally intensive than computing covariances in the ellipsoidal approach \texttt{ELL}. Nevertheless, the experiments considered here take at most a few minutes, which remains reasonable. 

\begin{figure*}[t!]
    \centering`

{\includegraphics[scale=0.34]{./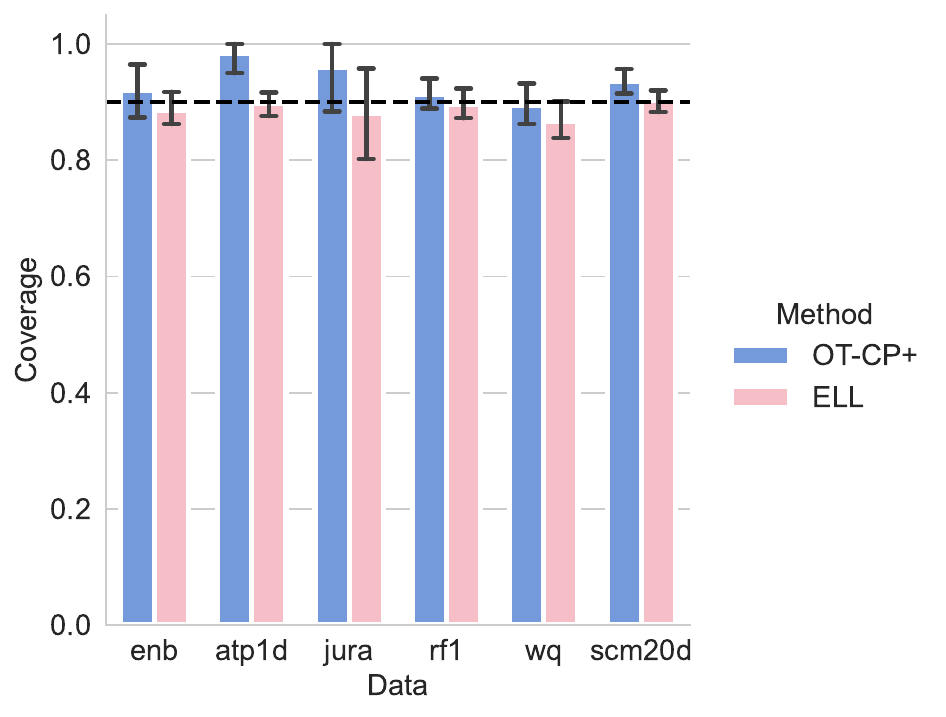}}
{\includegraphics[scale=0.34]{./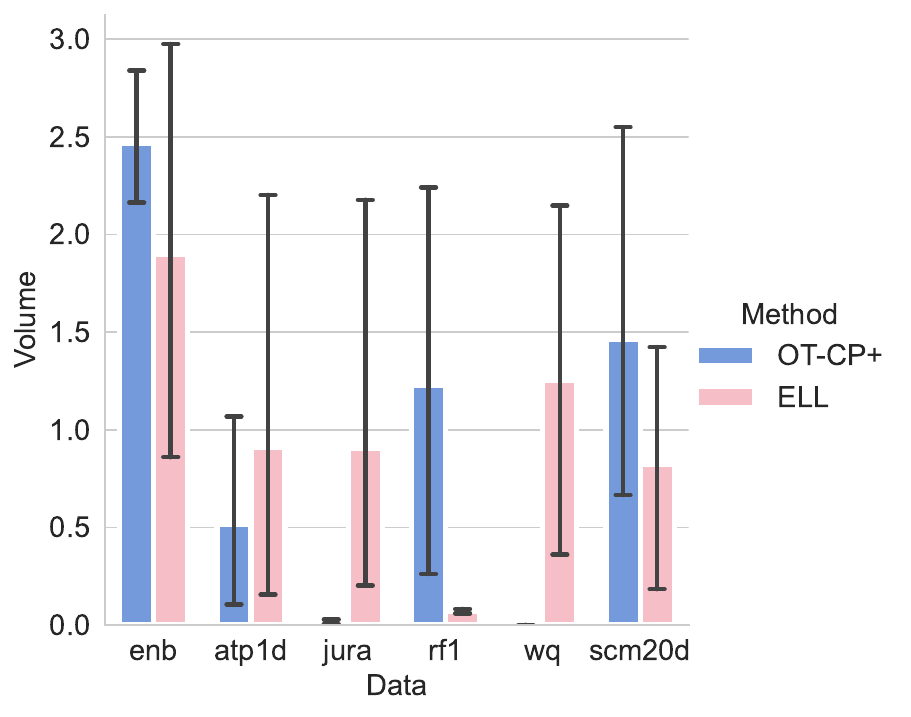}}
{\includegraphics[scale=0.34]{./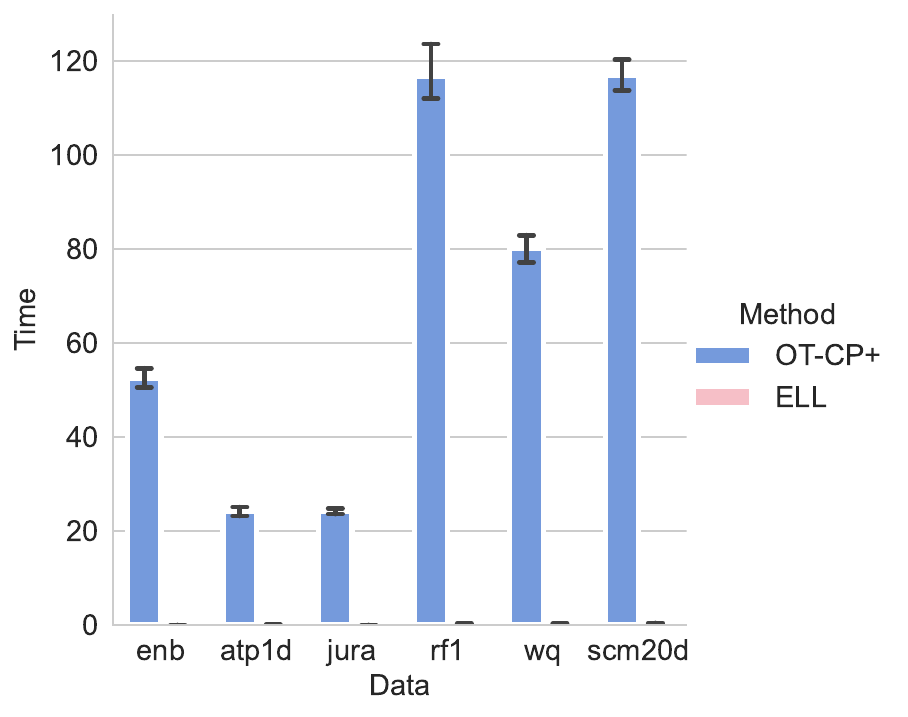}}
    
    \caption{Marginal coverage, volume and calibration time (in seconds) for experiments of \cref{RealDataComparison}}
\label{FigAddXPRegressionRealData}
\end{figure*}

\paragraph{Additional results for classification.} \Cref{SimuDataClassif} illustrates OT-CP's ability to adapt to label confusions. Therein, classes 0 and 1 tend to be confused by the QDA classifier. In such cases, OT-CP achieves better trade-off between coverage and efficiency/informativeness. When the distribution is made easier (classes 0 and 1 become more distinct for QDA) all methods yield similar efficiency and informativeness. The latter setting is depicted in \Cref{fig:newpatterns}. This supports that OT-CP can effectively adapt to classification patterns where certain labels are prone to confusion.
\begin{figure}[t!]
    \centering
    \includegraphics[width=1\linewidth]{./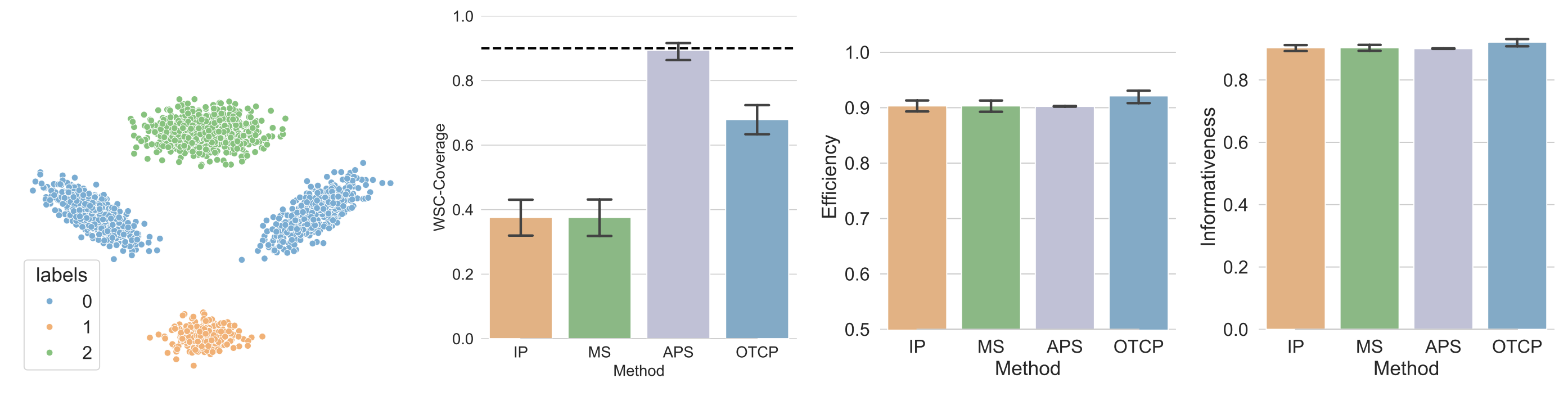}
    \caption{Classification by Quadratic Discriminant Analysis on simulated data, with a different uncertainty pattern}
    \label{fig:newpatterns}
\end{figure}

In \Cref{FigIllustClassif_K5MNIST,FigIllustClassif_K5}, we conduct the same experiment as in \Cref{FigCondtoy_ResultsBothClassif}, but with a subset of $K=5$ labels ($\{0,2,4,6,9\}$ for MNIST, and \{`T-shirt/top', `Pullover', `Coat', `Shirt', `Ankle boot'\} for Fashion-MNIST). Similar conclusions as for \Cref{FigCondtoy_ResultsBothClassif} apply here. 
Results in \Cref{FigIllustClassif_K5MNIST,FigIllustClassif_K5} are averaged over $10$ runs, each with $10\,000$ randomly chosen observations split in train/calibration/test with ratio $50\%,40\%,10\%$.
We observe that OT-CP performs better for $K=5$ than for $K=10$, achieving the efficiency of the IP and MS scores while improving adaptivity, akin to the APS score. 
This suggests that for large $K$, OT-CP could benefit from more tailored score functions than $s(x,y) = \vert\bar{y}-\hat{\pi}(x)\vert \in \R_+^K$, inspired \eg by univariate penalized approaches \citep{angelopoulos2021uncertainty,melki2024penalized}. 

\begin{figure*}[t!]
    \centering
    \subfigure[Coverage]{\includegraphics[width=0.32\textwidth]{./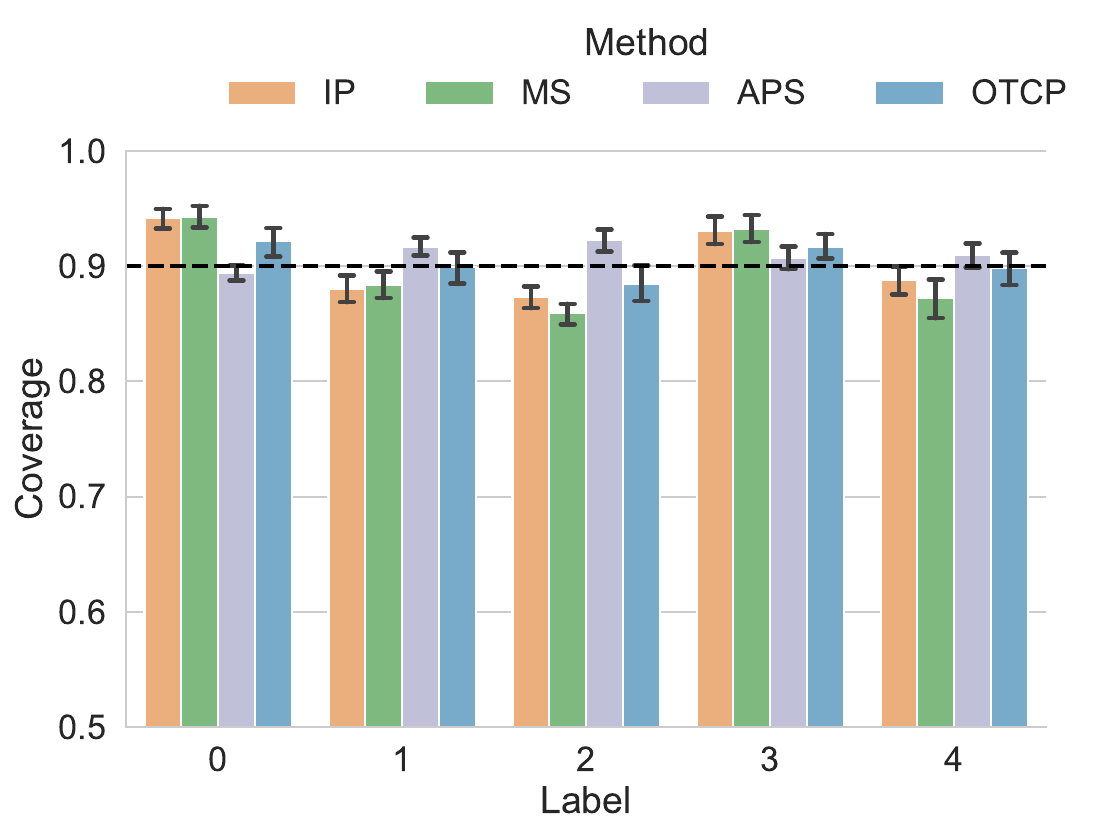}}
    \subfigure[Average size]{\includegraphics[width=0.32\textwidth]{./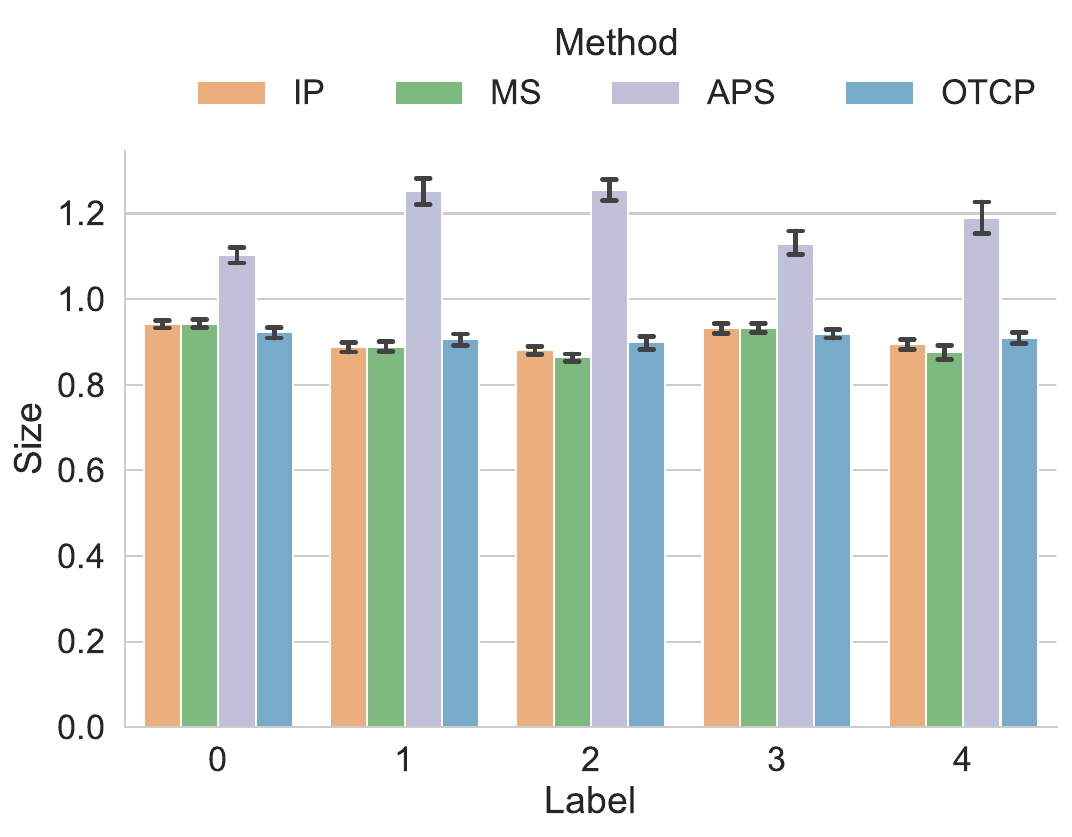}}
    \subfigure[Informativeness]{\includegraphics[width=0.32\textwidth]{./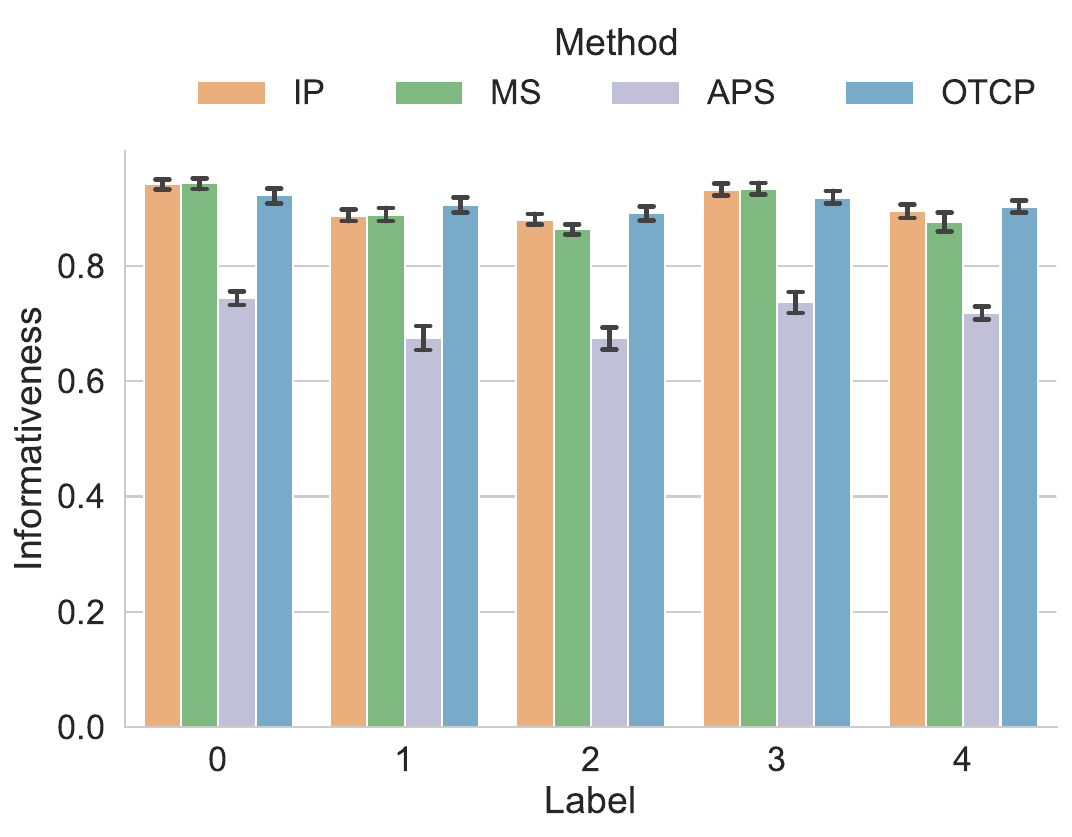}}
    \caption{Label-wise results on $K=5$ classes of MNIST}
    \label{FigIllustClassif_K5MNIST}
\end{figure*}

\begin{figure*}[t!]
    \centering
    \subfigure[Coverage]{\includegraphics[width=0.32\textwidth]{./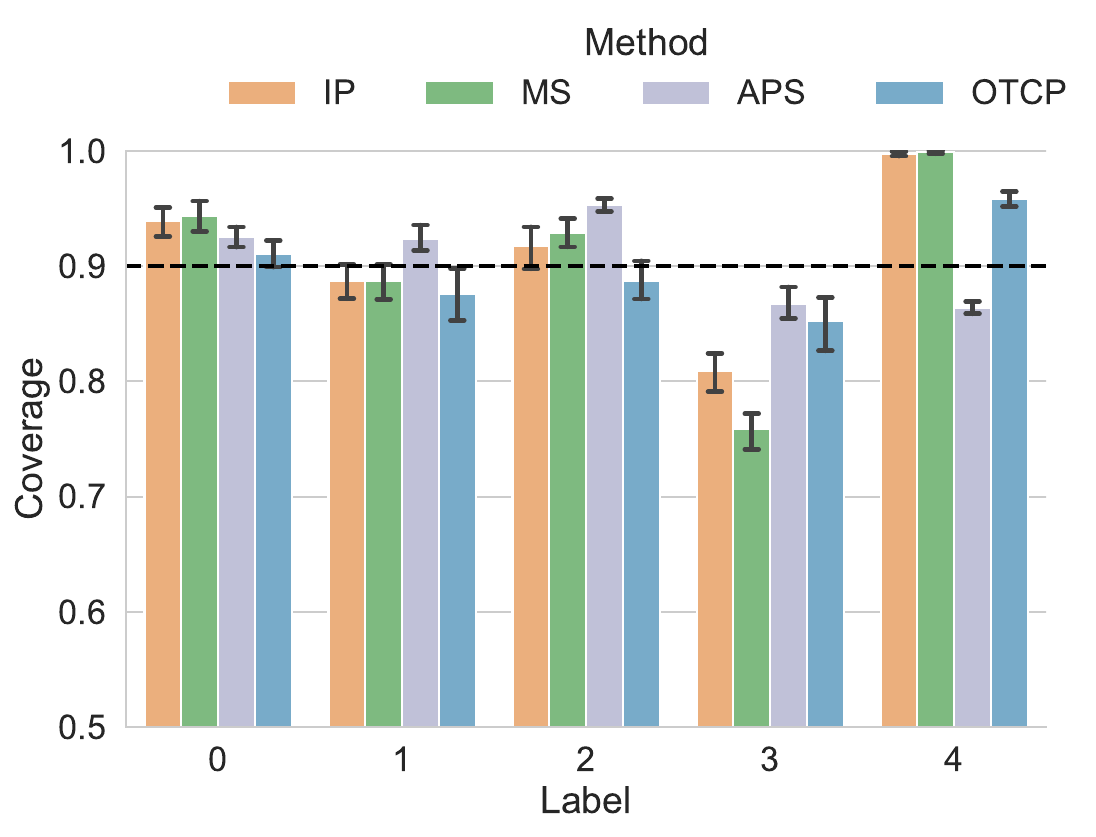}}
    \subfigure[Average size]{\includegraphics[width=0.32\textwidth]{./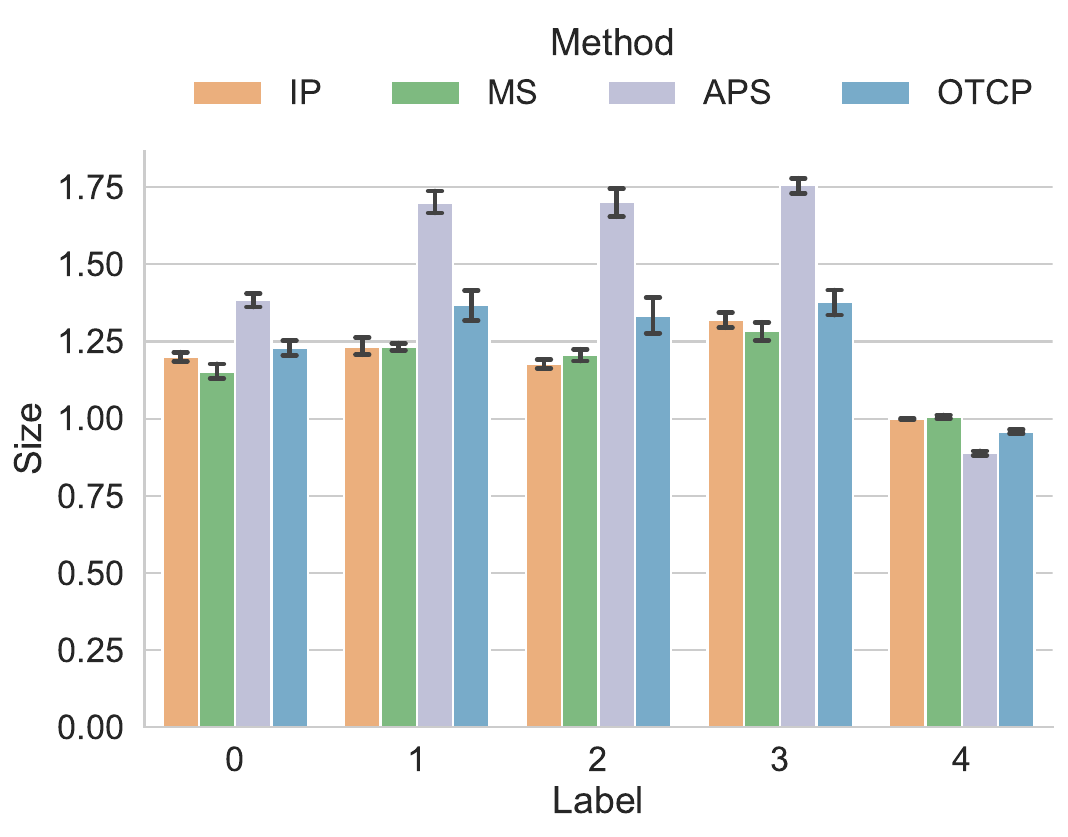}}
    \subfigure[Informativeness]{\includegraphics[width=0.32\textwidth]{./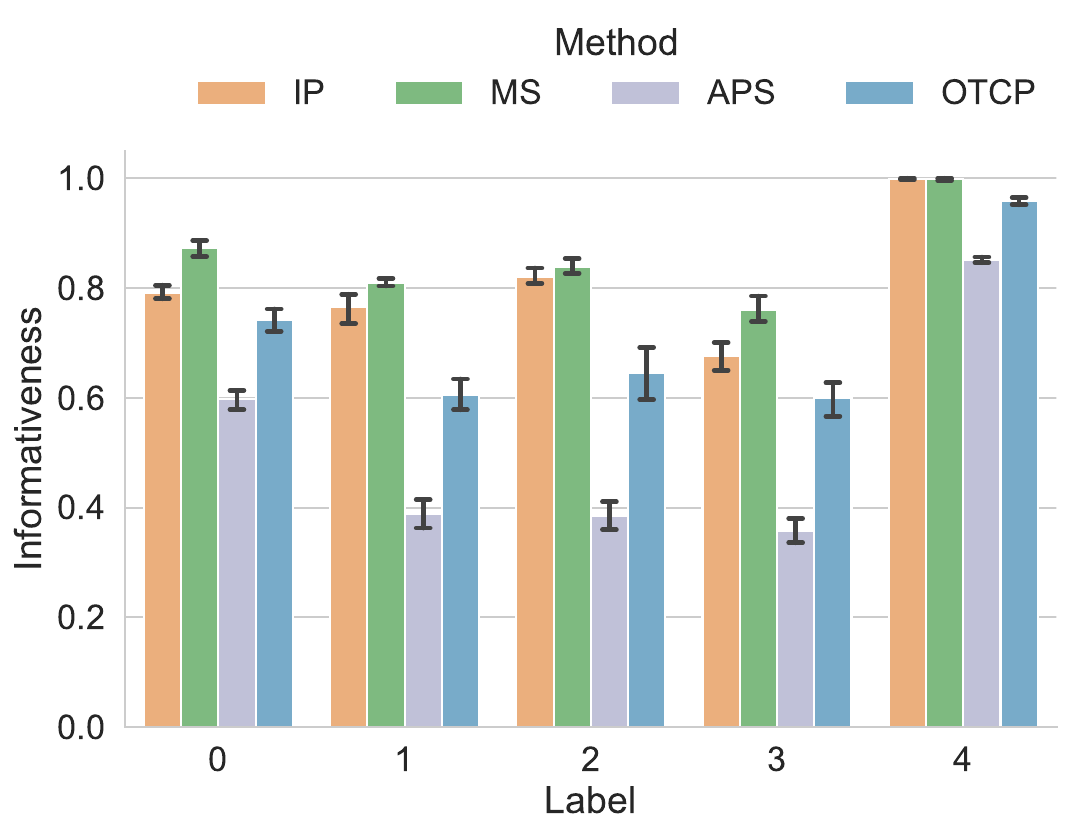}}
    \caption{Label-wise results on $K=5$ classes of Fashion-MNIST}
    \label{FigIllustClassif_K5}
\end{figure*}

\end{document}